\newtheorem*{rep@theorem}{\rep@title}
\newcommand{\newreptheorem}[2]{%
\newenvironment{rep#1}[1]{%
 \def\rep@title{#2 \ref{##1}}%
 \begin{rep@theorem}}%
 {\end{rep@theorem}}}
\newtheorem{theorem}{Theorem}[section]
\newtheorem{lemma}[theorem]{Lemma}
\newtheorem{proposition}[theorem]{Proposition}
\newtheorem{assumption}[theorem]{Assumption}
\newenvironment{myproof}[1][\proofname]{\proof[#1]\mbox{}}{\endproof}
\title{Risk-Aware Algorithms for Adversarial Contextual Bandits}
\author{Wen Sun$^1$,  Debadeepta Dey$^2$, and Ashish Kapoor$^2$}
\affil{$^1$Robotics Institute, School of Computer Science, \\ Carnegie Mellon University \\ \href{mailto:wensun@cs.cmu.com}{wensun@cs.cmu.com} \\
$^2$Microsoft Research, Redmond  \\
\href{mailto:dedey@microsoft.com}{dedey@microsoft.com}, \href{mailto:akapoor@microsoft.com}{akapoor@microsoft.com}
}
\begin{document}
\maketitle

\begin{abstract}
In this work we consider adversarial contextual bandits with risk constraints. At each round, nature prepares a context, a cost for each arm, and additionally a risk for each arm. The learner leverages the context to pull an arm and then receives the corresponding cost and risk associated with the pulled arm. In addition to minimizing the cumulative cost, the learner also needs to satisfy long-term risk constraints -- the average of the cumulative risk from all pulled arms should not be larger than a pre-defined threshold. To address this problem, we first study the full information setting where in each round the learner receives an adversarial convex loss and a convex constraint. We develop a meta algorithm leveraging online mirror descent for the full information setting and extend it to contextual bandit with risk constraints setting using expert advice. Our algorithms can achieve near-optimal regret in terms of minimizing the total cost, while successfully maintaining a sublinear growth of cumulative risk constraint violation.
\end{abstract}

\section{Introduction}
The \emph{Contextual Bandits} problem \citep{langford2008epoch} has received a large amount of attention in the last decade.  Different from the classic multi-armed bandits problem \citep{auer2002finite,bubeck2012regret}, in contextual bandits, the learner can leverage contextual information to make a decision about which arm to pull. Starting in a completely unknown environment, the learner gradually learns to maximize the cumulative reward by interacting with the environment: in each round, given the contextual information, the learner chooses an arm to pull based on the history of the interaction with the environment, and then receives the reward associated with the pulled arm. For the special case where contexts and rewards are i.i.d sampled from a fixed unknown distribution, there exists an oracle-based computationally efficient algorithm \citep{agarwal2014taming} that achieves near-optimal regret rate. Recently, the authors in \citep{rakhlin2016bistro,syrgkanis2016improved} developed oracle-based computationally efficient algorithms for the hybrid case where the contexts are i.i.d while the rewards could be adversarial, though the regret rate from proposed algorithms are not near-optimal.  For both adversarial contexts and rewards, EXP4 \citep{auer2002nonstochastic} and EXP4.P \citep{beygelzimer2011contextual} are state-of-the-art algorithms, which achieve near-optimal regret rate, but are not computationally efficient.

Recently, a few authors have started to incorporate global constraints into the multi-armed bandit and contextual bandits problem where the goal of the learner is to maximize the reward while satisfying the constraints to some degree. In multi-armed bandit setting, previous work considered special cases such as single resource budget constraint \citep{ding2013multi,madani2004budgeted} and multiple resources budget constraints \citep{badanidiyuru2013bandits}.  Resourceful Contextual Bandits \citep{badanidiyuru2014resourceful} first introduced resource budget constraints to contextual bandits setting. The algorithm proposed in \citep{badanidiyuru2014resourceful} enjoys a near-optimal regret rate but lacks a computationally efficient implementation. Later on, the authors in \citep{agrawal2015contextual} generalize the setting in \citep{badanidiyuru2014resourceful} to contextual bandits with a global convex constraint and a concave objective and propose an oracle-based algorithm built on the ILOVETOCONBANDITS algorithm from \citep{agarwal2014taming}.  Recently  \cite{agrawal2015linear} introduce a UCB style algorithm for linear contextual bandits with knapsack constraints. The settings considered in \citep{badanidiyuru2014resourceful,agrawal2015contextual,agrawal2015linear} mainly focused on the stochastic case where contexts and rewards are i.i.d, and the constraints are pre-fixed before the game starts (i.e., time-independent, non-adversarial). To the best of our knowledge, the work presented in this paper is the first attempt to extend the previous work to the adversarial setting.

 This paper considers contextual bandits with risk constraints, where for each round, the environment prepares a context, a cost for each arm,\footnote{In order to be consistent to classic Online Convex Programming setting, in this work we consider minimizing cost, instead of maximizing reward. } and a risk for each arm. The learner pulls an arm using the contextual information and receives the cost and risk associated with the pulled arm. Given a pre-defined risk threshold, the learner ideally needs to make decisions (i.e., designing a distribution over all arms) such that the average risk is no larger than the threshold in every round, while minimizing the cumulative cost as fast as possible. Such adversarial risk functions are common in many real world applications. For instance, when a robot navigating in an unfamiliar environment,  risk (e.g., probability of being collision, energy consumption, and safety with respect to other robots or even human around the robot) and reward of taking a particular action may dependent on the robot's current state (or the whole sequence of states traversed by the robot so far), while the sequential states visited by the robot are unlikely to be i.i.d or even Markovian.

To address the adversarial contextual bandit with risk constraints problem, we first study the problem of online convex programming (OCP) with constraints in the full information setting, where at each round, the environment prepares a convex loss, and additionally a convex constraint for the learner. The learner wants to minimize its cumulative loss while satisfying the constraints as possible as she could. The online learning with constraints setting is first studied in \citep{mannor2009online} in a two-player game setting. Particularly the authors constructed a two-player game where there exists a strategy for the adversary such that among the strategies of the player that \emph{satisfy the constraints on average},  there is no strategy can achieve no-regret property in terms of maximizing the player's reward.  Later on \citep{mahdavi2012trading,jenatton2015adaptive} considered the online convex programming framework where they  introduced a pre-defined global constraint and designed algorithms that achieve no-regret property on loss functions while maintaining the accumulative constraint violation grows sublinearly.  Though the work in \citep{mahdavi2012trading,jenatton2015adaptive} did not consider time-dependent, adversarial constraints, we find that their online gradient descent (OGD) \citep{Zinkevich2003_ICML} based algorithms are actually general enough to handle adversarial time-dependent constraints. We first present a family of online learning algorithms based on Mirror Descent (OMD) \citep{beck2003mirror,bubeck2015convex}, which we show achieves near-optimal regret rate with respect to loss  and maintains the growth of total constraint violation to be sublinear. With a specific design of a mirror map, our meta algorithm reveals a similar algorithm shown in \citep{mahdavi2012trading}.

The mirror descent based algorithms in the full information online learning setting also enables us to derive a Multiplicative Weight (MW) update procedure using expert advice by choosing negative entropy as the mirror map. Note that MW based update procedure is important when extending to partial information contextual bandit setting. The MW based update procedure can ensure the regret is polylogarithmic in the number of experts , instead of polynomial in the number of experts from using the OGD-based algorithms \citep{mahdavi2012trading,jenatton2015adaptive}. Leveraging the MW update procedure developed from the online learning setting, we present algorithms called EXP4.R (EXP4 with Risk Constraints) and EXP4.P.R (EXP4.P with Risk Constraints). For EXP4.R we show that in expectation, the algorithm can achieve near optimal regret in terms of minimizing cost while ensuring the average of the accumulative risk is no larger than the pre-defined threshold. For EXP4.P.R, we present a high probability statement for regret bound and cumulative risk  bound and introduces a tradeoff parameter that shows how one can trade between the risk violation and the regret of cost.


The rest of the paper is organized as follows. We  introduce necessary definitions and problem setup in Sec.~\ref{sec:pre}. We then deviate to the full information online learning setting where we introduce sequential, adversarial convex constraints in Sec.~\ref{sec:online_learning}. In Sec.~\ref{sec:contextual}, we move to contextual bandits with risk constraints setting to present and analyze the EXP4.R and EXP4.P.R algorithm.

\section{Preliminaries}
\label{sec:pre}
\subsection{Definitions}
For any function $R(x): \mathcal{X}\to \mathcal{R}$, it is strongly convex with respect to some norm $\|\cdot\|$ if and only if there exists a constant $\alpha\in\mathcal{R}^+$  such that:
\begin{align}
R(x) \geq R(x_0) + \nabla R(x_0)^T (x - x_0) + \frac{\alpha}{2}\|x - x_0\|^2. \nonumber
\end{align} Given a strongly convex function $R(\cdot)$, the Bregman divergence $D_R(\cdot,\cdot): \mathcal{X}\times\mathcal{X}\to \mathcal{R}$ is defined as follows:
\begin{align}
D_{R}(x, x') =  R(x) - R(x') - \nabla R(x')^T(x - x'). \nonumber
\end{align}

\subsection{Online Convex Programming with  Constraints}
Under the full information setting, in each round, the learner makes a decision $x_t\in\mathcal{X}\subseteq\mathcal{R}^d$, and then receive a convex loss function $\ell_t(\cdot)$ and a convex constraint  in the form of $f_t(\cdot)\leq 0$. The learner suffers loss $\ell_t(x)$. 
The work in \citep{mahdavi2012trading} considers a similar setting but with a known, pre-defined global constraint. Instead of projecting the decision $x$ back to the convex set induced by the global constraint $f(\cdot)$, \citep{mahdavi2012trading} introduces an algorithm that achieves no-regret on loss while satisfying the global constrain in a long-term perspective.  Since exactly satisfying adversarial constraint in every round is impossible, we also consider constraint satisfaction in a long-term perspective.  Formally, for the sequence of decisions $\{x_t\}_t$ made by the learner, we define $\sum_{t=1}^T f_t(x_t)$ as the cumulative constraint violation and we want to control the growth of the cumulative constraint violation to be sublinear: $\sum_{t=1}^T f_t(x_t)\in o(T)$, so that for the \emph{long-term constraint} $\frac{1}{T}\sum_{t=1}^T f_t(x)$, we have:
\begin{align}
\lim_{T\to\infty}\frac{1}{T}\sum_{t=1}^T f_t(x_t)\leq 0.
\end{align}

Though we consider adversarial constraints, we do place one assumption on the decision set $\mathcal{X}$ and the constraints: we assume that the decision set $\mathcal{X}$ is rich enough such that in hindsight, we have $x\in\mathcal{X}$ that can satisfy the all constraints: $\mathcal{O} \dot{=} \{x\in\mathcal{X}: f_t(x) \leq 0, \forall t\}\neq \emptyset$.  In terms of the definition of regret, we compete with the optimal decision $x^*\in \mathcal{O}$ that minimizes the total loss in hindsight:
\begin{align}
\label{eq:constraint_on_best}
x^* = \arg\min_{x\sim\mathcal{O}} \sum_{t=1}^T \ell_t(x).
\end{align} Though one probably would be interested in  competing against the best decision from the set of decisions that satisfy the constraints in average: $\mathcal{O}' \dot{=} \{x\in\mathcal{X}: (1/T)\sum_{t}^T f_t(x) \leq 0\}$,  in general it is impossible to complete agains the best decision in $\mathcal{O}'$ in hindsight. The following proposition adapts the discrete 2-player game from proposition 4 in \citep{mannor2009online} for the online convex programming with adversary constraints setting and shows  the learner is impossible to compete agains $\mathcal{O}'$:
\begin{proposition}
\label{prop:wrong_decision_set}
There exist a decision set $\mathcal{X}$, a sequence of convex loss functions $\{\ell_t(x)\}$, and a sequence of convex constraints $\{f_t(x)\leq0\}$, such that for any sequence of decisions $\{x_1, ..., x_t, ...\}$, if it satisfies the long-term constrain as $\limsup_{t\to\infty} \frac{1}{t}\sum_{i=1}^t f_i(x_i) \leq 0$, then if competing against $\mathcal{O}'$, the  regret grows at least linearly:
\begin{align}
\limsup_{t\to\infty}\big( \sum_{i=1}^t \ell_i(x_i)  -  \min_{x\in\mathcal{O}'}\sum_{i=1}^t \ell_i(x)\big) = \Omega(t).
\end{align}
\end{proposition}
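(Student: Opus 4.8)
The plan is to adapt the two-player game of \citet{mannor2009online} to the online convex programming setting, realizing it with a one-dimensional decision set $\mathcal{X}=[0,1]$ and with each $\ell_t$ and $f_t$ affine in $x$ (hence convex), whose coefficients are selected by an adversary that reacts to the learner's past decisions. Concretely, I would let $x$ denote the probability of the first of two actions, use matching-type affine losses (one action cheap in a state $A$, the other cheap in a state $B$) together with affine constraints, and have the adversary choose the round-$t$ state $s_t\in\{A,B\}$ as a function of the learner's running average $\bar x_{t-1}=\frac{1}{t-1}\sum_{i<t}x_i$ (never of $x_t$ itself). The purpose of the adaptivity is a ``retroactive refund'': the adversary keeps the binding constraint active whenever the learner plays feasibly and cheaply, but arranges the realized constraint sequence so that, in hindsight, the averaged constraint $\frac1t\sum_i f_i$ is slack.

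I would first record why adaptivity is unavoidable, since this clarifies where the difficulty lives. Because each $\ell_t$ and $f_t$ is affine, an oblivious (fixed-in-advance) sequence cannot separate $\mathcal{O}'$ from the achievable loss: for such a sequence the learner can anticipate the averaged constraint and, by playing the time-average, match $\min_{x\in\mathcal{O}'}\frac1t\sum_i\ell_i(x)$ up to $o(t)$ by Jensen's inequality. It is precisely the adversary's dependence on the learner's history that makes online feasibility (which must hold for the foresight-free, time-varying sequence $\{x_i\}$) strictly more restrictive than the hindsight fixed-point feasibility defining $\mathcal{O}'$; this gap is the engine of the lower bound.

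With the instance fixed, I would carry out two estimates. First, an upper bound on the comparator: exhibit a single action $x'\in[0,1]$ that is feasible on average, $\frac1t\sum_i f_i(x')\le 0$, so that $x'\in\mathcal{O}'$ (in particular $\mathcal{O}'\neq\emptyset$ and the minimum is well defined), and whose averaged loss $\frac1t\sum_i\ell_i(x')$ tends to some small value $\gamma-\delta$. Second, and this is the crux, a uniform lower bound $\liminf_{t}\frac1t\sum_i\ell_i(x_i)\ge\gamma$ over \emph{every} decision sequence satisfying the long-term constraint $\limsup_{t}\frac1t\sum_i f_i(x_i)\le 0$. Subtracting the two and multiplying by $t$ yields the advertised $\Omega(t)$ regret.

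The main obstacle is the second estimate, which must hold for all learners, including randomized and arbitrarily time-varying ones. I would handle it with a self-consistency (fixed-point) argument on the learner's limiting average behavior together with a case analysis. Letting $\phi$ denote the limiting fraction of rounds in which the adversary selects state $A$ (itself determined through the adaptive rule by the learner's own averages), I would show that any way of splitting the decisions which keeps $\frac1t\sum_i f_i(x_i)\le 0$ forces the realized loss $\frac1t\sum_i\ell_i(x_i)$ to stay above $\gamma$, because it reflects the correlation between $x_i$ and $s_i$ that the adaptive adversary induces; the comparator $x'$, being fixed, only has to beat the averaged loss $\frac1t\sum_i\ell_i(x')$ built from the marginal state frequencies. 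Quantifying this correlation gap, and verifying that the payoff and constraint numbers can be chosen so that $\delta>0$ robustly for every admissible learner trajectory, is the delicate part; the remaining checks (affineness and boundedness of $\ell_t,f_t$, and the $\limsup$/$\liminf$ bookkeeping) are routine.
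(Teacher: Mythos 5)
Your plan follows essentially the same route as the paper: adapt the two-player game from Proposition 4 of \citet{mannor2009online}, realize it with affine losses and constraints on a one-dimensional simplex, and drive the construction with an adversary that switches state based on the learner's running average $\bar x_{t-1}$. The paper's actual proof does exactly this, with $\mathcal{X}=\Delta([1,2])$, $\ell^1(x)=[-1,0]x$, $\ell^2(x)=[-1,1]x$, $f^1(x)=[-1,-1]x$, $f^2(x)=[1,-1]x$, and a two-phase counter rule triggered by whether $\frac{1}{t-1}\sum_{i<t}x_i[1]>3/4$.

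However, there is a genuine gap: the step you yourself flag as ``the crux'' --- the uniform lower bound $\liminf_t \frac{1}{t}\sum_i \ell_i(x_i)\ge\gamma$ over \emph{every} feasible learner trajectory, with a strictly positive margin $\delta$ over the comparator --- is precisely the content of the proof, and your proposal does not carry it out. Two specific problems. First, your device of ``letting $\phi$ denote the limiting fraction of rounds in which the adversary selects state $A$'' presupposes that such a limit exists; for an arbitrary (possibly non-convergent) learner it need not, and the paper avoids this by an explicit counter construction: it proves Phase 2 is entered infinitely often (else $\hat q_t\to 1$ forces $\bar x_t[1]\le 1/2$, contradicting the Phase-1 condition), then extracts a subsequence of times $t_{i+1}$ at which the two phases have equal length, so $\hat q_{t_{i+1}}=1/2$, the comparator's per-round loss is $-1$, and the bound $\hat\alpha_{t_i'}\le 3/4$ together with $t_i'\ge t_{i+1}/2$ yields $\sum_{j\le t_{i+1}}x_j[1]\le \tfrac{7}{8}t_{i+1}$, hence a per-round gap of $1/8$. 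Without pinning down this bookkeeping (which rounds the adversary plays which pair, and how the running-average threshold forces the learner's cumulative first coordinate below $7/8$ of the horizon), the claim that ``any way of splitting the decisions which keeps $\frac1t\sum_i f_i(x_i)\le 0$ forces the realized loss above $\gamma$'' is an assertion, not an argument. Second, you never exhibit the concrete payoff and constraint numbers for which the margin $\delta>0$ holds robustly; you acknowledge this as ``the delicate part,'' but it is the part that cannot be omitted.
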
 The proof of the proposition can be find in Sec.~\ref{sec:prop_wrong_decision_set} in Appendix.  Hence in the rest of the paper, we have to restrict to $\mathcal{O}$, which is a subset of $\mathcal{O}'$.  The regret of loss $R_{\ell}$ and the cumulative constraint violation $R_{f}$ are defined as:
\begin{align}
R_{\ell} = \sum_{t=1}^T \ell_t(x_t) - \sum_{t=1}^{T} \ell_t(x^*), \;\;\;\;\;R_{f}  = \sum_{t=1}^T f_t(x_t). \nonumber
\end{align} We want both $R_l, R_f \in o(T)$. We will assume the decision set is bounded as $\max_{x_1,x_2\in\mathcal{X}}D_R(x_1,x_2)\leq B\in\mathcal{R}^+$, $x\in\mathcal{X}$ is bounded as $\|x\|\leq X\in\mathcal{R}^+$, the loss function is bounded as $|\ell_t(\cdot)|\leq F\in\mathcal{R}^+$, the constraint is bounded $|f_t(\cdot)|\leq D\in\mathcal{R}^+$ and the gradient of the loss and constraint is also bounded as $\max\{\|\nabla_x \ell_t(x)\|_{*},\|\nabla_x f_t(x)\|_{*}\}\leq G\in\mathcal{R}^+$, where $\|\cdot\|_*$ is the dual norm with respect to $\|\cdot\|$ defined for $\mathcal{X}$.

The setting with a global constraint considered in \citep{mahdavi2012trading} is a special case of our setting. Set $f_t = f$, where $f$ is the global constraint. If $R_f\in o(T)$, by Jensen's inequality, we have $f(\sum_{t=1}^T x_t / T) \leq \sum_{t=1}^T f(x_t) / T = o(T)/T \to 0$, as $T\to \infty$.

\subsection{Contextual Bandits with Risk Constraints}
For contextual bandits with risk constraints, let $[K]$ be a finite set of $K$ arms, $\mathcal{S}$ be the space of contexts. Except for providing context and the cost for each action, the environment will also provide the \emph{risk} for each action (e.g., how dangerous or risky it would be by taking the action under the current context). More formally, at every time step $t$, the environment generates a context $s_t\in\mathcal{S}$, a K-dimensional cost vector $c_t\in [0,1]^K$, 
and a risk vector $r_t\in [0,1]^K$. The environment then reveals the context $s_t$ to the learner, and the learner then propose a probability distribution $p_t\in\Delta([K])$ over all arms. Finally the learner samples an action $a_t\in[K]$ according to $p_t$ and receives the cost and risk associated to the chosen action: $c_t[a_t]$ and $r_t[a_t]$ (we denote $c[i]$ as the $i$'th element of vector $c$). The learner ideally want to make a sequence of decisions that has low accumulative cost and also satisfies the constraint that related to the risk: $p_t^Tr_t\leq \beta$ where $\beta\in[0,1]$ is a pre-defined threshold. 


We address this problem by leveraging experts' advice. Given the expert set $\Pi$ that consists of $N$ experts $\{\pi_i\}_{i=1}^N$, where each expert $\pi\in\Pi: \mathcal{S}\to \Delta([K])$, gives advice by mapping from the context $s$ to a probability distribution $p$ over arms. The learner then properly combines the experts' advice $\{\pi_i(s)\}_{i=1}^N$ (e.g., compute the average $\sum_{i=1}^N \pi_i(s)/N$) to generate a distribution over all arms. With risk constraints, distributions over policies in $\Pi$ could be strictly more powerful than any policy in $\Pi$ itself. We aim to compete against this more powerful set, which is a stronger guarantee than simply competing with any fixed policy in $\Pi$. Given any distribution $w\in\Delta(\Pi)$, the mixed policy resulting from $w$ can be regarded as: sample policy $i$ according to $w$ and then sample an arm according to $\pi_i(s)$, given any context $s$.
Though we do not place any statistical assumptions (e.g., i.i.d) on the sequence of cost vectors $\{c_t\}$ and risk vectors $\{r_t\}$, we assume the policy set $\Pi$ is rich enough to satisfy the following assumption:
\begin{assumption}
The set of distributions from $\Delta(\Pi)$ whose mixed policies satisfying the risk constraints in expectation is non-empty:
\begin{align}
\mathcal{P} \dot{=} \{w\in\Delta(\Pi): \mathbb{E}_{i\sim w,j\sim \pi_i(s_t)} r_t[j] \leq \beta, \forall t\} \neq \emptyset. \nonumber
\end{align}
\end{assumption} Namely we assume that the distribution set $\Delta(\Pi)$ is rich enough such that there always exists at least one mixed policy that can satisfy all risk constraints in hindsight. Similar to the full information setting, competing against the set of mixed policies that satisfy the constraint on average, namely $\mathcal{P}'=\{w\in\Delta(\Pi): \sum_{t=1}^T \mathbb{E}_{i\sim w,j\sim \pi_i(s_t)} r_t[j] /T\leq \beta\}$,  is impossible in the partial information setting.\footnote{Otherwise we can just directly apply the algorithm designed for the partial information setting to the full information setting}  Hence we define the best mixed policy in hindsight as:
\begin{align}
w^* =\arg\min_{w\sim \mathcal{P}} \sum_{t=1}^T \mathbb{E}_{i\sim w,j\sim \pi_i(s_t)} c_t[j].
\end{align}
Given any sequence of decisions $\{a_t\}_{t=1}^T$ generated from some algorithm, let us define the average pseudo-regret in expectation as:
\begin{align}
\bar{R}_c = \frac{1}{T}\mathbb{E}\big[\sum_{t=1}^T c_t[a_t] - \sum_{t=1}^T\mathbb{E}_{i\sim w^*, j\sim \pi_i(s_t)}c_t[j]\big], \nonumber
\end{align} where the expectation is taken with respect to the randomness of the algorithm. The expected cumulative risk constraint violation as:
\begin{align}
\bar{R}_{r} = \frac{1}{T} \mathbb{E}\big[ \sum_{t=1}^T (r_t[a_t] - \beta)\big]. \nonumber
\end{align} The goal is to achieve near-optimal regret rate for $\bar{R}_c$ (i.e., $\bar{R}_c = O(\sqrt{TK\ln(|\Pi|)})$) while maintaining $\bar{R}_r$ growing sublinearly.  Without loss of generality, we also assume that the policy class $\Pi$ contains a policy that always outputs uniform distribution over arms (i.e., assign probability $1/K$ to each arm).

Similarly, we define regret and constraint violation \emph{without} expectation as:
\begin{align}
R_c = \frac{1}{T}& \big[\sum_{t=1}^T c_t[a_t] - \sum_{t=1}^T\mathbb{E}_{i\sim w^*, j\sim \pi_i(s_t)}c_t[j]\big], \;\;\;\;  {R}_{r} = \frac{1}{T}\big[ \sum_{t=1}^T (r_t[a_t] - \beta)\big]. \nonumber
\end{align} The goal is to minimize $R_c$ and $R_r$ in high probability.

\section{Online Learning with Constraints}
\label{sec:online_learning}
The online learning with adversarial constraints setting is similar to the one considered in \citep{mannor2009online,jenatton2015adaptive} except that they only have a pre-defined fixed global constraint. However we find that their algorithms and analysis are general enough to extend to the online learning with adversarial sequential constraints.  In \citep{mannor2009online,jenatton2015adaptive}, the algorithms introduce a Lagrangian dual parameter and perform  online gradient descent on $x$ and online gradient ascent on the dual parameter. Since in this work we are eventually interested in reducing the contextual bandit problem to the full information online learning setting, simply adopting the OGD-based approaches from \citep{mannor2009online,jenatton2015adaptive} will not give a near optimal regret bound. Hence, developing the corresponding Multiplicative Weight (MW) update procedure is essential for a successful reduction from adversarial contextual bandit to full information online learning setting.

\subsection{Algorithm}
We use the same saddle-point convex concave formation from \citep{mannor2009online,jenatton2015adaptive} to design a composite loss function as:
\begin{align}
\mathcal{L}_t(x, \lambda) = \ell_t(x) + \lambda f_t(x) - \frac{\delta\mu}{2}\lambda^2,
\end{align} where $\delta\in\mathcal{R}^+$.
Alg.~\ref{alg:ol_constraints} leverages online mirror descent (OMD) for updating the $x$ (Line~\ref{line:mirror_descent_1} and Line~\ref{line:mirror_descent_2}) and online gradient ascent algorithm for updating $\lambda$ (Line~\ref{line:update_of_lambda}). 
Note that if we replace the regularization function $R(x)$ by $\|x\|_2^2$, we reveals gradient descent based update rule that is similar to the one in \citep{mahdavi2012trading}. 

\begin{algorithm}[tb]
\caption{OCP with Constraints via OMD}
 \label{alg:ol_constraints}
\begin{algorithmic}[1]
  \STATE {\bfseries Input:} Parameters $\mu,\delta$, learning rate $\mu$, mirror map $R$.
  \STATE Initialize $x_0 \in\mathcal{X}$ and $\lambda_0 = 0$.
 \FOR {t = 0 to T}
    \STATE Learner proposes $x_t$.
    \STATE Receive loss function $\ell_t$ and constraint $f_t$.
    \STATE Set $\tilde{x}_{t+1}$ such that $\nabla R(\tilde{x}_{t+1}) = \nabla R({x}_t) - \mu\nabla_x\mathcal{L}(x_t,\lambda_t)$. \label{line:mirror_descent_1}
    \STATE Projection: $x_{t+1} = \arg\min_{x\in\mathcal{X}}D_R(x, \tilde{x}_{t+1})$. \label{line:mirror_descent_2}
    \STATE Update $\lambda_{t+1} = \max\{0, \lambda_t + \mu \nabla_{\lambda}\mathcal{L}(x_t,\lambda_t))\}$. \label{line:update_of_lambda}
 \ENDFOR
\end{algorithmic}
\end{algorithm}

\subsection{Analysis of Alg.~\ref{alg:ol_constraints}}
Throughout our analysis, we assume the regularization function $R(x)$ is $\alpha$-strongly convex. For simplicity, we assume the number of rounds $T$ is given and we consider the asymptotic property of Alg.~\ref{alg:ol_constraints} when $T$ is large enough. 

The algorithm should be really understood as running two no-regret procedures: (1) Online Mirror Descent on the sequence of loss $\{\mathcal{L}(x,\lambda_t)\}_t$ with respect to $x$ and (2) Online Gradient Ascent on the sequence of loss $\{\mathcal{L}(x_t,\lambda)\}_t$ with respect to $\lambda$.  Instead of digging into the details of Online Mirror Descent and Online Gradient ascent, our analysis simply leverage the existing analysis of online mirror descent and online gradient ascent and show how to combine them to derive the regret bound and constraint violation bound for Alg.~\ref{alg:ol_constraints}. 

\begin{theorem}
\label{them:ol}
Let $R(\cdot)$ be a $\alpha$-strongly convex function. Set $\mu = \sqrt{\frac{B}{T(D^2+G^2/\alpha)}}$ and $\delta = \frac{2G^2}{\alpha}$. For any convex loss $\ell_t(x)$, convex constraint $f_t(x)\leq 0$, under the assumption that $\mathcal{O}\neq \emptyset$, the family of algorithms induced by Alg.~\ref{alg:ol_constraints} have the following property:
\begin{align}
R_{\ell} / T \leq O(1/\sqrt{T}), \;\;\;\; R_{f}/T \leq O(T^{-1/4}). \nonumber
\end{align}
\end{theorem}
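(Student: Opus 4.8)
The plan is to read Alg.~\ref{alg:ol_constraints} as two interacting no-regret players, invoke the textbook regret bounds for each, and then \emph{add} the two bounds so that the cross terms cancel under the stated choices of $\mu$ and $\delta$. Concretely, the $x$-iterates run online mirror descent on $\{\mathcal{L}_t(\cdot,\lambda_t)\}_t$ and the $\lambda$-iterates run online gradient ascent on $\{\mathcal{L}_t(x_t,\cdot)\}_t$, so I would first lay down their two guarantees side by side and then glue them at a common comparator.

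First I would write the OMD bound against the comparator $x^*\in\mathcal{O}$. With an $\alpha$-strongly convex mirror map and $D_R(x^*,x_0)\le B$, convexity gives
\begin{align}
\sum_t\big[\mathcal{L}_t(x_t,\lambda_t)-\mathcal{L}_t(x^*,\lambda_t)\big]\le \frac{B}{\mu}+\frac{\mu}{2\alpha}\sum_t\|\nabla_x\mathcal{L}_t(x_t,\lambda_t)\|_*^2, \nonumber
\end{align}
and since $f_t(x^*)\le 0$, $\lambda_t\ge 0$ we have $\mathcal{L}_t(x^*,\lambda_t)\le \ell_t(x^*)-\tfrac{\delta\mu}{2}\lambda_t^2$, so the left side dominates $R_\ell+\sum_t\lambda_t f_t(x_t)+\tfrac{\delta\mu}{2}\sum_t\lambda_t^2$. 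Symmetrically, because $\mathcal{L}_t(x_t,\cdot)$ is concave and $\lambda_0=0$, for every $\lambda\ge 0$ the ascent bound reads
\begin{align}
\sum_t\big[\mathcal{L}_t(x_t,\lambda)-\mathcal{L}_t(x_t,\lambda_t)\big]\le \frac{\lambda^2}{2\mu}+\frac{\mu}{2}\sum_t\big(\nabla_\lambda\mathcal{L}_t(x_t,\lambda_t)\big)^2. \nonumber
\end{align}
The key intermediate lemma I would prove is a uniform bound on the dual iterates: writing the update as $\lambda_{t+1}\le(1-\delta\mu^2)\lambda_t+\mu f_t(x_t)$ and using $|f_t|\le D$ with $1-\delta\mu^2\in(0,1)$ (valid once $T$ is large, since $\delta\mu^2\to 0$), a one-line induction gives $\lambda_t\le D/(\delta\mu)$. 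This contraction is supplied precisely by the regularizer $-\tfrac{\delta\mu}{2}\lambda^2$, and it forces $|\nabla_\lambda\mathcal{L}_t|=|f_t(x_t)-\delta\mu\lambda_t|\le 2D$, so the ascent gradient term is clean, namely $O(\mu D^2 T)$.

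Next I would add the two displays. On the right I bound $\|\nabla_x\mathcal{L}_t(x_t,\lambda_t)\|_*\le G(1+\lambda_t)$, so $\tfrac{\mu}{2\alpha}\|\nabla_x\mathcal{L}_t\|_*^2\le \tfrac{\mu G^2}{\alpha}(1+\lambda_t^2)$. This is exactly where $\delta=2G^2/\alpha$ earns its keep: the surviving $+\tfrac{\delta\mu}{2}\sum_t\lambda_t^2$ on the left cancels $\tfrac{\mu G^2}{\alpha}\sum_t\lambda_t^2$ on the right, removing the only term that is quadratic in the unbounded quantity $\lambda_t$. What remains is a master inequality, valid for all $\lambda\ge 0$,
\begin{align}
R_\ell+\lambda R_f-\tfrac{\delta\mu}{2}T\lambda^2\le \frac{B}{\mu}+\frac{\lambda^2}{2\mu}+\frac{\mu G^2 T}{\alpha}+2\mu D^2 T. \nonumber
\end{align}

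Finally I would extract both rates from this one inequality. Setting $\lambda=0$ isolates $R_\ell\le B/\mu+\mu T(G^2/\alpha+2D^2)$, and substituting $\mu=\sqrt{B/(T(D^2+G^2/\alpha))}$ balances the leading terms to give $R_\ell=O(\sqrt{T})$, i.e. $R_\ell/T=O(1/\sqrt{T})$. For the violation I use $R_\ell\ge -2FT$ (from $|\ell_t|\le F$) to push $R_\ell$ to the right, divide by $\lambda>0$, and minimize the resulting $\lambda A+C/\lambda$ over $\lambda$, where $A=\tfrac{\delta\mu T}{2}+\tfrac{1}{2\mu}=\Theta(\sqrt{T})$ and $C=2FT+B/\mu+\mu T(G^2/\alpha+2D^2)=\Theta(T)$; the optimum $2\sqrt{AC}=\Theta(T^{3/4})$ yields $R_f=O(T^{3/4})$, i.e. $R_f/T=O(T^{-1/4})$. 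The hard part is the middle step: one must recognize that the single regularizer $-\tfrac{\delta\mu}{2}\lambda^2$ has to do double duty — keeping $\lambda_t$ bounded so the ascent gradients stay $O(D)$, and simultaneously cancelling the $\lambda_t^2$ growth of the mirror-descent gradient norm — and that these two requirements pin down the value $\delta=2G^2/\alpha$.
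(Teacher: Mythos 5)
Your proposal is correct and follows the same overall architecture as the paper's proof: run the OMD regret bound on $\{\mathcal{L}_t(\cdot,\lambda_t)\}_t$ and the OGD ascent bound on $\{\mathcal{L}_t(x_t,\cdot)\}_t$, add them at the common comparator $(x^*,\lambda)$ so the cross term $\sum_t\lambda_t f_t(x_t)$ cancels, use the $-\tfrac{\delta\mu}{2}\lambda^2$ regularizer to kill the $\sum_t\lambda_t^2$ growth, then set $\lambda=0$ for $R_\ell$ and optimize $\lambda$ (against the lower bound $R_\ell\ge -2FT$) for $R_f$. The one place you genuinely diverge is in handling the dual gradient: the paper bounds $(\nabla_\lambda\mathcal{L}_t)^2\le 2D^2+2\delta^2\mu^2\lambda_t^2$, keeps the $\lambda_t^2$ piece, and absorbs it together with the $\tfrac{G^2}{\alpha}\lambda_t^2$ piece from the primal gradient into the regularizer via the asymptotic condition $\delta\ge\delta^2\mu^2+G^2/\alpha$; you instead prove the uniform bound $\lambda_t\le D/(\delta\mu)$ by a one-line induction on the update (the analogue of which the paper only proves later, for EXP4.P.R, under a sign assumption on the risks), which makes $|\nabla_\lambda\mathcal{L}_t|\le 2D$ outright and turns the cancellation into the exact identity $\tfrac{\delta\mu}{2}=\tfrac{\mu G^2}{\alpha}$ rather than an inequality that holds only for large $T$. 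Your route costs an extra lemma but buys a cleaner, non-asymptotic justification of the choice $\delta=2G^2/\alpha$; both yield the same rates. One bookkeeping nit: the term $+\tfrac{\delta\mu}{2}\sum_t\lambda_t^2$ does not come from the OMD display as you state (there the $\tfrac{\delta\mu}{2}\lambda_t^2$ contributions of $\mathcal{L}_t(x_t,\lambda_t)$ and $-\mathcal{L}_t(x^*,\lambda_t)$ cancel, leaving exactly $R_\ell+\sum_t\lambda_t f_t(x_t)$); it emerges from the ascent display once the two are added. Your master inequality is nevertheless correct, so this is a misattribution, not a gap.
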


\begin{myproof}[Proof Sketch of Theorem~\ref{them:ol}]
Since the algorithm runs online mirror descent on the sequence of loss $\{\mathcal{L}_t(x,\lambda_t)\}_t$ with respect to $x$, using the existing results of online mirror descent (e.g., Theorem 4.2 and Eq. 4.10 from \cite{bubeck2015convex}), we know that for the sequence of $\{x_t\}_t$:
\begin{align}
\label{eq:mirror_descent_analysis_1}
&\sum_{t=0}^T(\mathcal{L}_t(x_t, \lambda_t) - \mathcal{L}_t(x, \lambda_t))\leq \frac{D_R(x, x_0)}{\mu} + \frac{\mu}{2\alpha}\sum_{t=0}^T\|\nabla_{x}\mathcal{L}(x_t,\lambda_t)\|_{*}^2.
\end{align}
Also, we know that the algorithm runs online gradient ascent on the sequence of loss $\{\mathcal{L}_t(x_t, \lambda)\}_t$ with respect to $\lambda$, using the existing analysis of online gradient descent \citep{Zinkevich2003_ICML}, we have for the sequence of $\lambda_t$:
\begin{align}
\label{eq:ogd_analysis_1}
&\sum_{t=0}^T\mathcal{L}_t(x_t,\lambda) - \sum_{t=1}^T\mathcal{L}_t(x_t,\lambda_t) \leq \frac{1}{\mu}\lambda^2 + \frac{\mu}{2}\sum_{t=1}^T \big(\frac{\partial \mathcal{L}_t(w_t,\lambda_t)}{\partial \lambda_t}\big)^2,
\end{align}

Note that for $(\partial\mathcal{L}_t(x_t,\lambda_t)/\partial \lambda_t)^2 = (f_t(x_t) - \delta\mu\lambda_t)^2\leq 2f_t^2(x_t) + 2\delta^2\mu^2\lambda_t^2 \leq 2D^2 + \delta^2\mu^2\lambda_t^2$. Similarly for $\|\nabla_x\mathcal{L}_t(x_t,\lambda_t)\|_*^2$, we also have:
\begin{align}
&\|\nabla_x\mathcal{L}_t(x_t,\lambda_t)\|_*^2 \leq 2\|\nabla\ell_t(x_t)\|_*^2 + 2\|\lambda_t\nabla f_t(x_t)\|_*^2 \leq 2G^2(1+\lambda_t^2),
\end{align} where we first used triangle inequality for $\|\nabla_x\mathcal{L}_t(x_t,\lambda_t)\|_*$ and then use the inequality of $2ab\leq a^2+b^2, \forall a,b\in \mathcal{R}^+$. Note that we also assumed that the norm of the gradients are bounded as $max(\|\nabla\ell_t(x_t)\|_*, \|\nabla f_t(x_t)\|_*) \leq G\in\mathcal{R}^+$.
Now sum Inequality~\ref{eq:mirror_descent_analysis_1} and \ref{eq:ogd_analysis_1} together, we get:
\begin{align}
&\sum_{t}\mathcal{L}_t(x_t,\lambda) - \mathcal{L}_t(x,\lambda_t)\nonumber\\
&\leq\frac{2D_R(x,x_0) + \lambda^2}{2\mu} + \sum_{t}\mu(D^2 +  \delta^2\mu^2\lambda_t^2)+ \sum_t\frac{\mu G^2}{\alpha}(1+\lambda_t^2) \nonumber\\
& = \frac{2D_R(x,x_0) + \lambda^2}{2\mu} + T\mu(D^2 + \frac{ G^2}{\alpha})  + \mu(\delta^2\mu^2 + \frac{ G^2}{\alpha})\sum\lambda_t^2.
\end{align}

Substitute the form of $\mathcal{L}_t$ into the above inequality, we have:
\begin{align}
\label{eq:first_eq_in_ol}
& \sum_t(\ell_t(x_t) - \ell_t(x)) + \sum_t(\lambda f_t(x_t) - \lambda_t f_t(x))+ \frac{\delta\mu}{2}\sum_t\lambda_t^2 - \frac{\delta\mu T}{2}\lambda^2\nonumber\\
& \leq \frac{2D_R(x,x_0) + \lambda^2}{2\mu}  + T\mu(D^2 + \frac{ G^2}{\alpha}) + \mu(\delta^2\mu^2 + \frac{ G^2}{\alpha})\sum_t\lambda_t^2.
\end{align}  Note that from our setting of $\mu$ and $\delta$ we can verify that $\delta \geq \delta^2\mu^2 + G^2/\alpha$,\footnote{For simplicity we assumed $T$ is large enough to be larger than any given constant.} we can remove the term $\sum_{t}\lambda_t^2$ in the above inequality.

Without the term $\sum_{t}\lambda_t^2$, to upper bound the regret on loss $\ell_t$, let us set $\lambda = 0$ and $x = x^*$, we get:
\begin{align}
&\sum_t(\ell_t(x_t) - \ell_t(x^*)) \leq \frac{2D_R(x,x_0)}{2\mu} + T\mu(D^2 + G^2/\alpha) \nonumber\\
&\leq 2\sqrt{D_R(x,x_0)T(D^2+G^2/\alpha)} = O(\sqrt{T}), \nonumber
\end{align} with $\mu = \sqrt{D_R(x,x0)/(T(D^2+G^2/\alpha))}$.
To upper bound $\sum_t f_t(x_t)$, we first observe that we can lower found $\sum_{t=1}^T \ell_t(x_t) - \min_{x}\sum_{t=1}^T \ell_t(x) \geq -2FT$, where $F$ is the upper bound of $\ell(\cdot)$. Replace $\sum_t \ell_t(x_t) - \ell_t(x)$ by $-2FT$ in Eq.~\ref{eq:first_eq_in_ol}, and set $\lambda = (\sum_t f_t(x_t))/(\delta\mu T + 1/\mu)$ (here we assume $\sum_t f_t(x_t) \geq 0$, otherwise we prove the theorem), we can show that:
\begin{align}
&(\sum_{t=1}^T f_t(x_t) )^2\leq \frac{8G^2}{\alpha}D_R(x,x_0) + 2(D^2+\frac{G^2}{\alpha})T  + T^{3/2}\sqrt{8F^2G^2/\alpha}
\end{align} The  RHS of the above inequality is dominated by the term $T^{3/2}\sqrt{8F^2G^2/\alpha}$ when $T$ approaches to infinity. Hence, it is straightforward to show that $\sum_{t=1}^T f_t(x_t) = O(T^{3/4})$.
\end{myproof}

As we can see that if we replace $R(x)$ with $\|x\|_2^2$ in Alg.~\ref{alg:ol_constraints}, we reveal a gradient descent based update procedure that is almost identical to the one in  \citep{mahdavi2012trading}. When $x$ is restricted to a simplex, to derive the multiplicative weight update procedure, we replace $R(x)$ with the negative entropy regularization $\sum_{i} x[i]\ln(x[i])$ and we can achieve the following update steps for $x$:
\begin{align}
x_{t+1}[i] = \frac{x_{t}[i]\exp(-\mu \nabla_x\mathcal{L}_t(x_t,\lambda_t)[i])}{\sum_{j=1}^d x_{t}[j]\exp(-\mu \nabla_x\mathcal{L}_t(x_t,\lambda_t)[j])}. \nonumber
\end{align} We refer readers to \citep{Shwartz2011_FTML,bubeck2015convex} for the derivation of the above equation.

\section{Contextual Bandits with Risk Constraints}
\label{sec:contextual}
When contexts, costs and risks are i.i.d sampled from some unknown distribution, then our problem setting can be regarded as a special case of the setting of contextual bandit with global objective and constraint (CBwRC) considered in \citep{agrawal2015contextual}. In \citep{agrawal2015contextual}, the algorithm also leverages Lagrangian dual variable. The difference is that in i.i.d setting the dual parameter is fixed with respect to the underlying distribution and hence it is possible to estimate the dual variable. 
For instance one can uniformly pull arms with a fixed number of rounds at the beginning to gather information for estimating the dual variable and then use the estimated dual variable for all remaining rounds. However in the adversarial setting, this nice trick will fail since the costs and risks are possibly sampled from a changing distribution. We have to rely on OCP algorithms to keep updating the dual variable to adapt to adversarial risks and costs.

\subsection{Algorithm}
\begin{algorithm}[tb]
\caption{EXP4 with Risk Constraints (EXP4.R)}
 \label{alg:exp4_constraints}
\begin{algorithmic}[1]
  \STATE {\bfseries Input:} Policy set $\Pi$.
  \STATE Initialize $w_0 = [1/N,...,1/N]^T$ and $\lambda_0 = 0$.
 \FOR {t = 0 to T}
    \STATE Receive context $s_t$.
    \STATE Query experts to get advice $\pi_i(s_t), \forall i\in [N]$.
    \STATE Set $p_t = \sum_{i=1}^N w_t[i]\pi_i(s_t)$.
    \STATE Draw action $a_t$ randomly from distribution $p_t$.
    \STATE Receive cost $c_t[a_t]$ and risk $r_t[a_t]$.
    \STATE Set the cost vector $\hat{c}_t\in R^K$ and the risk vector $\hat{r}_t\in R^K$ as follows: for all $i\in [K]$
    \begin{align}
    \hat{c}_t[i] = \frac{c_t[i]\mathbbm{1}(a_t = i)}{p_t[i]}, \;\;\; \hat{r}_t[i] = \frac{r_t[i]\mathbbm{1}(a_t = i)}{p_t[i]}. \nonumber
    \end{align}

    \STATE For each expert $j\in [N]$, set:
    \begin{align}
    \hat{y}_t[j] = \pi_j(s_t)^T\hat{c}_t,\;\;\; \hat{z}_t[j] = \pi_j(s_t)^T\hat{r}_t. \nonumber
    \end{align}
    \STATE Compute $w_{t+1}$, for $i\in [|\Pi|]$: \label{line:update_w}
    	\begin{align}
	w_{t+1}[i] = \frac{w_t[i]\exp\big(-\mu (\hat{y}_t[i] + \lambda_t\hat{z}_t[i])\big)} {\sum_{j=1}^{|\Pi|}  w_t[j] \exp\big(-\mu(\hat{y}_t[j] + \lambda_t\hat{z}_t[j])\big)}. \nonumber
	\end{align}
    \STATE Compute $\lambda_{t+1}$: \label{line:update_lambda}
    \begin{align}
    \lambda_{t+1} = \max\{0, \lambda_t + \mu (w_t^T\hat{z}_t - \beta - \delta\mu\lambda_t)\}. \nonumber
    \end{align}
     \ENDFOR
\end{algorithmic}
\end{algorithm}

Our algorithm EXP4.R (EXP4 with Risk constraints) (Alg.~\ref{alg:exp4_constraints}) extends the EXP4 algorithm to carefully incorporating the risk constraints for updating the probability distribution $w$ over all policies. At each round, it first uses the common trick of importance sampling to form an unbiased estimates of cost vector $\hat{c}$ and risk vector $\hat{r}$. Then the algorithm uses the unbiased estimates of cost vector and risk vector to form unbiased estimates of the cost $\hat{y}[i]$ and risk $\hat{z}[i]$ for each expert $i$. EXP4.R then starts behaving different than EXP4.  EXP4.R introduces a dual variable $\lambda$ and combine the cost and risk together as $\mathcal{L}_t(w,\lambda) = w^T \hat{y}_t + \lambda (w^T\hat{z}_t - \beta) - \frac{\delta\mu}{2}\lambda^2$. We then use Alg.~\ref{alg:ol_constraints} with the negative entropy regularization as a black box online learner to update the weight $w$ and the dual variable $\lambda$ as shown in Lines~\ref{line:update_w} and \ref{line:update_lambda} of Alg.~\ref{alg:exp4_constraints}.

The proposed algorithm EXP4.R is computationally inefficient since similar to EXP4.P, it needs to maintain a probability distribution over the policy set. Though there exist computationally efficient algorithms for stochastic contextual bandits and hybrid contextual bandits, we are not aware of any computationally efficient algorithm for adversarial contextual bandits, even without risk constraints.

\subsection{Analysis of EXP4.R}
We provide a reduction based analysis for EXP4.R by first reducing EXP4.R to Alg.~\ref{alg:ol_constraints} with negative entropic regularization.  For the following analysis, let us define $y_t[j] = \pi_j(s_t)^T c_t$ and $z_t[j] = \pi_j(s_t)^T r_t$, which stand for the expected cost and risk for policy $j$ at round $t$.

Let us define $\mathcal{L}_t(w, \lambda) = w^T\hat{y}_t +\lambda (w^T\hat{z}_t - \beta)- \frac{\delta\mu}{2}\lambda^2$. The multiplicative weight update in Line~\ref{line:update_w} can be regarded as running Weighted Majority on the sequence of loss $\{\mathcal{L}_t(w, \lambda_t)\}_t$, while the update rule for $\lambda$ in Line~\ref{line:update_lambda} can be regarded as running Online Gradient Ascent on the sequence of loss $\{\mathcal{L}_t(w_t, \lambda)\}_t$. Directly applying the classic analysis of Weighted Majority \citep{Shwartz2011_FTML} on the generated sequence of weights $\{w_t\}_t$ and the classic analysis of OGD \citep{Zinkevich2003_ICML} on the generated sequence of dual variables $\{\lambda_t\}_t$, we get the following lemma:

\begin{lemma}
With the negative entropy as the regularization function for $R$, running Alg.~\ref{alg:ol_constraints} on the sequence of linear loss functions $\ell_t(w) = w^T\hat{y}_t$ and linear constraint $f_t(w) = w^T\hat{z}_t - \beta\leq 0$, we have:
\begin{align}
\sum_{t=1}^T\mathcal{L}_t(w_t,\lambda) - \sum_{t=1}^T\mathcal{L}_t(w,\lambda_t)  \leq & \frac{\lambda^2}{\mu} + \frac{\ln (|\Pi|)}{\mu} + \frac{\mu}{2}\sum_{t=1}^T\Big(
\big(\sum_{i=1}^{|\Pi|}w_t[i](2\hat{y}_t[i]^2+2\lambda_t^2\hat{z}_t[i]^2)\big) \nonumber \\
&\;\;\;\;\;\; + (w_t^T\hat{z}_t - \beta - \delta\mu\lambda_t)^2
\Big).
\label{eq:before_expectation_tmp}
\end{align}
\end{lemma}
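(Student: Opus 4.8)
The plan is to read the left-hand side as a standard primal–dual (saddle-point) regret and to split it into a primal part controlled by the multiplicative-weight update on $w$ and a dual part controlled by the gradient ascent on $\lambda$, reusing exactly the two no-regret guarantees already invoked in the proof of Theorem~\ref{them:ol}. First I would add and subtract $\sum_t \mathcal{L}_t(w_t,\lambda_t)$ to obtain
\begin{align}
\sum_{t=1}^T \mathcal{L}_t(w_t,\lambda) - \sum_{t=1}^T \mathcal{L}_t(w,\lambda_t)
&= \underbrace{\sum_{t=1}^T\big(\mathcal{L}_t(w_t,\lambda_t) - \mathcal{L}_t(w,\lambda_t)\big)}_{(\mathrm{I})} \nonumber\\
&\quad + \underbrace{\sum_{t=1}^T\big(\mathcal{L}_t(w_t,\lambda) - \mathcal{L}_t(w_t,\lambda_t)\big)}_{(\mathrm{II})}. \nonumber
\end{align}
Term $(\mathrm{I})$ is precisely the regret of the $w$-update against the fixed comparator $w$ on the sequence of losses $w\mapsto \mathcal{L}_t(w,\lambda_t)$, which are \emph{linear} in $w$ with gradient $\nabla_w \mathcal{L}_t(w,\lambda_t)=\hat{y}_t+\lambda_t\hat{z}_t$; term $(\mathrm{II})$ is the regret of the $\lambda$-ascent against comparator $\lambda$ on the concave sequence $\lambda\mapsto \mathcal{L}_t(w_t,\lambda)$.

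For $(\mathrm{I})$, I would note that Line~\ref{line:update_w} of Alg.~\ref{alg:exp4_constraints} is exactly the negative-entropy instantiation of the mirror-descent step of Alg.~\ref{alg:ol_constraints} on the simplex $\Delta(\Pi)$, i.e. the classic Weighted Majority / exponential-weights update. Applying the standard Weighted Majority analysis \citep{Shwartz2011_FTML} with the per-coordinate loss $g_t[i]=\hat{y}_t[i]+\lambda_t\hat{z}_t[i]$ gives the \emph{local-norm} bound
\begin{align}
(\mathrm{I}) \leq \frac{\ln|\Pi|}{\mu} + \frac{\mu}{2}\sum_{t=1}^T \sum_{i=1}^{|\Pi|} w_t[i]\,\big(\hat{y}_t[i]+\lambda_t\hat{z}_t[i]\big)^2, \nonumber
\end{align}
where the $\ln|\Pi|$ term comes from $D_R(w,w_0)=\ln|\Pi|+\sum_i w[i]\ln w[i]\leq \ln|\Pi|$ because $w_0$ is uniform. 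Then $(a+b)^2\leq 2a^2+2b^2$ converts the quadratic into $\sum_i w_t[i]\big(2\hat{y}_t[i]^2+2\lambda_t^2\hat{z}_t[i]^2\big)$, reproducing the first two summands of the claim.

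For $(\mathrm{II})$, Line~\ref{line:update_lambda} is online gradient ascent over $\lambda\ge0$ with gradient $\partial_\lambda \mathcal{L}_t(w_t,\lambda_t)=w_t^T\hat{z}_t-\beta-\delta\mu\lambda_t$, so Zinkevich's analysis \citep{Zinkevich2003_ICML}, initialized at $\lambda_0=0$, yields
\begin{align}
(\mathrm{II}) \leq \frac{\lambda^2}{2\mu} + \frac{\mu}{2}\sum_{t=1}^T \big(w_t^T\hat{z}_t-\beta-\delta\mu\lambda_t\big)^2. \nonumber
\end{align}
Upper-bounding $\frac{\lambda^2}{2\mu}\leq\frac{\lambda^2}{\mu}$ and adding the two bounds gives exactly the stated inequality.

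The main obstacle is justifying the \emph{local-norm} form of the $w$-regret rather than the cruder generic mirror-descent bound $\tfrac{\mu}{2}\|\hat{y}_t+\lambda_t\hat{z}_t\|_\infty^2$ (as in Eq.~4.10 of \cite{bubeck2015convex}): only the local form produces the $\sum_i w_t[i](\cdot)^2$ structure, which is what later collapses to an $O(K)$ variance once expectations of the importance-weighted estimates are taken. Obtaining it requires the Weighted Majority refinement $e^{-x}\le 1-x+\tfrac{x^2}{2}$, which is valid here because the effective per-coordinate loss $\hat{y}_t[i]+\lambda_t\hat{z}_t[i]$ is nonnegative (the estimators $\hat{c}_t,\hat{r}_t$ are nonnegative and $\lambda_t\ge0$); I would therefore verify this nonnegativity together with the boundedness needed for the expansion, as the one genuinely nontrivial check. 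The remaining index bookkeeping (the $t=0$ versus $t=1$ ranges in the two subproblems) is routine.
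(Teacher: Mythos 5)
Your proposal is correct and follows essentially the same route as the paper: decompose the saddle-point quantity into the Weighted Majority regret on $\{\mathcal{L}_t(\cdot,\lambda_t)\}_t$ plus the online gradient ascent regret on $\{\mathcal{L}_t(w_t,\cdot)\}_t$, invoke the two classical bounds, sum them, and apply $(a+b)^2\leq 2a^2+2b^2$ to the $w$-part only. Your extra check that the local-norm (second-order) form of the exponential-weights bound needs the per-coordinate losses $\hat{y}_t[i]+\lambda_t\hat{z}_t[i]\geq 0$ is a worthwhile point of care that the paper glosses over by citation, but it does not change the argument.
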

We defer the proof of the above lemma to Appendix. 
The EXP4.R algorithm has the following property:
\begin{theorem}
\label{them:bandits}
Set $\mu = \sqrt{\ln(|\Pi|)/(T(K+4))}$ and $\delta = 3K$. Assume $\mathcal{P}\neq \emptyset$. EXP4.R has the following property:
\begin{align}
&\bar{R}_c  = O(\sqrt{K\ln(|\Pi|)/T}), \nonumber\\
 &\bar{R}_r = O(T^{-1/4}(K\ln(|\Pi|))^{1/4}). \nonumber
\end{align}
\end{theorem}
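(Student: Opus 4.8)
The plan is to reduce the analysis of EXP4.R to the full-information guarantee already established, exploiting the fact that the algorithm is literally running Alg.~\ref{alg:ol_constraints} with negative-entropy regularization on the estimated linear loss $\ell_t(w) = w^T\hat{y}_t$ and estimated linear constraint $f_t(w) = w^T\hat{z}_t - \beta$. The starting point is the Lemma preceding the theorem, which bounds $\sum_t \mathcal{L}_t(w_t,\lambda) - \sum_t \mathcal{L}_t(w,\lambda_t)$ by a data-dependent quantity. My first step would be to take expectation over the algorithm's randomness (the action draws $a_t\sim p_t$) of both sides of that bound. The key observation is that the importance-weighted estimates are unbiased: $\mathbb{E}_{a_t\sim p_t}[\hat{c}_t[i]] = c_t[i]$ and $\mathbb{E}_{a_t\sim p_t}[\hat{r}_t[i]] = r_t[i]$, so that $\mathbb{E}[\hat{y}_t[j]] = y_t[j]$ and $\mathbb{E}[\hat{z}_t[j]] = z_t[j]$, conditioned on the history. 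This turns the left side into the true expected loss/constraint surrogate involving $y_t,z_t$.

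The central technical step is controlling the expected value of the second-moment terms on the right. Here I would use the standard EXP4 variance calculation: since $\hat{c}_t[i] = c_t[i]\mathbbm{1}(a_t=i)/p_t[i]$, one has $\mathbb{E}_{a_t\sim p_t}[\hat{c}_t[i]^2] = c_t[i]^2/p_t[i] \le 1/p_t[i]$, and crucially $\sum_i p_t[i]\hat{c}_t[i]^2$ has bounded expectation. The quantity $\sum_{j} w_t[j]\,\hat{y}_t[j]^2 = \sum_j w_t[j](\pi_j(s_t)^T\hat{c}_t)^2$ must be shown to have expectation controlled by a factor of order $K$; the clean way is to bound $\mathbb{E}[\sum_j w_t[j]\hat{y}_t[j]^2] \le \mathbb{E}[\sum_i \hat{c}_t[i]^2 p_t[i]]$ using Jensen/convexity and $p_t = \sum_j w_t[j]\pi_j(s_t)$, which evaluates to at most $K$. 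An analogous bound holds for the risk term $\sum_j w_t[j]\hat{z}_t[j]^2$, and the term $(w_t^T\hat{z}_t - \beta - \delta\mu\lambda_t)^2$ is bounded by a constant plus $\delta^2\mu^2\lambda_t^2$ as in the proof of Theorem~\ref{them:ol}. This is exactly where the choice $\delta = 3K$ enters: it is large enough to absorb the $K$-dependent variance contributions multiplying $\lambda_t^2$, so that the $\sum_t \lambda_t^2$ terms can be cancelled against the $-\frac{\delta\mu}{2}\sum_t\lambda_t^2$ appearing on the left, just as in the full-information proof.

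Once the $\lambda_t^2$ terms are removed, the argument mirrors Theorem~\ref{them:ol} exactly. To bound $\bar{R}_c$ I would set $\lambda = 0$ and $w = w^*$, using that $w^*\in\mathcal{P}$ satisfies $z_t(w^*) = \mathbb{E}_{i\sim w^*,j\sim\pi_i(s_t)}r_t[j] \le \beta$ for all $t$, so that the constraint contribution $\lambda_t(w^{*T}z_t - \beta)$ is non-positive and can be dropped; this yields $\mathbb{E}[\sum_t(w_t^T y_t - w^{*T}y_t)] = O(\sqrt{TK\ln|\Pi|})$ with the stated $\mu$. Finally, since $\mathbb{E}[c_t[a_t]] = \mathbb{E}[p_t^T c_t] = \mathbb{E}[w_t^T y_t]$, dividing by $T$ gives the claimed $\bar{R}_c = O(\sqrt{K\ln(|\Pi|)/T})$. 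To bound $\bar{R}_r$, I would repeat the constraint-violation argument of Theorem~\ref{them:ol}: lower-bound the expected loss regret by $-2T$ (since costs lie in $[0,1]$), choose $\lambda = (\sum_t \mathbb{E}[w_t^T z_t - \beta])/(\delta\mu T + 1/\mu)$, and solve the resulting quadratic inequality to get $\sum_t \mathbb{E}[w_t^T z_t - \beta] = O(T^{3/4}(K\ln|\Pi|)^{1/4})$; since $\mathbb{E}[r_t[a_t]] = \mathbb{E}[w_t^T z_t]$, dividing by $T$ gives the stated $\bar{R}_r$ rate.

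\textbf{Main obstacle.} The delicate part is the variance bound on the second-moment terms: verifying that $\mathbb{E}[\sum_j w_t[j]\hat{y}_t[j]^2]$ and its risk analogue are genuinely $O(K)$ rather than $O(K^2)$ or worse, and confirming that the resulting $K$-dependence is precisely what $\delta = 3K$ is calibrated to cancel so that the full-information cancellation of $\sum_t\lambda_t^2$ still goes through. Getting the constant $3K$ (as opposed to merely "order $K$") to dominate all the pooled $\lambda_t^2$ coefficients is the step that requires careful bookkeeping rather than a clean appeal to the earlier theorem.
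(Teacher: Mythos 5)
Your proposal is correct and follows essentially the same route as the paper's own proof: take expectations of the lemma's bound, use unbiasedness of the importance-weighted estimates, bound the second-moment terms by $K$ via Jensen's inequality, cancel the $\sum_t \lambda_t^2$ terms using $\delta = 3K$, and then instantiate $(w,\lambda)$ as $(w^*,0)$ for the regret bound and with the optimized positive $\lambda$ for the quadratic constraint-violation bound. The only differences are cosmetic (e.g., the exact denominator in the choice of $\lambda$), so no further comparison is needed.
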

\begin{myproof}[Proof Sketch of Theorem~\ref{them:bandits}]
The proof consists of a  combination of the analysis of EXP4 and the analysis of Theorem~\ref{them:ol}. We defer the full proof in Appendix~\ref{sec:exp4_R_analysis}.  We first present several known facts. First, we have $w_t^T\hat{z}_t =  r_t[a_t] \leq 1$ and $w_t^T\hat{y}_t = c_t[a_t]\leq 1$.

For $\mathbb{E}_{a_t\sim p_t}(w_t^T\hat{z}_t - \beta)^2$, we can show that:
$\mathbb{E}_{a_t\sim p_t}(w_t^T\hat{z}_t - \beta)^2 \leq 2+2\beta^2 \leq 4$.

It is also straightforward to show that $\mathbb{E}_{a_t\sim p_t} \hat{y}_t = y_t$ and $\mathbb{E}_{a_t\sim p_t}\hat{z}_t = z_t$.
It is also true that $\mathbb{E}_{a_t\sim p_t}\sum_{i=1}^{|\Pi|}w_t[i]\hat{y}_t[i]^2  \leq K$ and $\mathbb{E}_{a_t\sim p_t}\sum_{i=1}^{|\Pi|}w_t[i]\hat{z}_t[i]^2 \leq K$.

Now  take expectation with respect to the sequence of decisions $\{a_t\}_t$ on LHS of Inequality~\ref{eq:before_expectation_tmp}: 
\begin{align}
\label{eq:exp4_fact_5}
&\mathbb{E}_{\{a_t\}_t}\sum_{t=1}^T\Big[ \mathcal{L}_t(w_t,\lambda) -  \mathcal{L}_t(w,\lambda_t) \Big] \nonumber\\
& = \sum_{t=1}^T\big[\mathbb{E}c_t[a_t] + \lambda(\mathbb{E}r_t[a_t] - \beta)  - y_t^Tw - \lambda_t(z_t^Tw - \beta) + \frac{\delta\mu}{2}\lambda_t^2\big] -  \frac{\delta\mu T}{2}\lambda^2
\end{align}
Now take the expectation with respect to $a_1,...,a_T$ on the RHS of inequality~\ref{eq:before_expectation_tmp}, we can get:
\begin{align}
\label{eq:exp4_fact_6}
&\mathbb{E}[\textit{RHS of Inequality~\ref{eq:before_expectation_tmp}}]
\leq \frac{\lambda^2}{\mu} + \frac{\ln|\Pi|}{\mu} + \mu T(K+4) + \mu(K+\delta^2\mu^2)\sum_{t=1}^T\lambda_t^2.
\end{align} Now we  can chain Eq.~\ref{eq:exp4_fact_5} and \ref{eq:exp4_fact_6} together and  use the same technique that we used in the analysis of Theorem~\ref{them:ol}. Chain Eq.~\ref{eq:exp4_fact_5} and \ref{eq:exp4_fact_6} together and set $w$ to $w^*$ and $\lambda = 0$, it is not hard to show that:
\begin{align}
\label{eq:fact_8}
&\mathbb{E}\big(\sum_{t=1}^T c_t[a_t] -\sum_{t=1}^T y_t^Tw^*\big)  \leq 2\sqrt{\ln(|\Pi|) T(K+4)} = O(\sqrt{TK\ln(|\Pi|)}),  \nonumber
\end{align} where $\mu = \sqrt{\ln(|\Pi|)/(T(K+4))}$.  Set $\lambda =  (\sum_t (\mathbb{E}r_t[a_t] - \beta)) / (\delta\mu T + 2/\mu)$, we can get:
\begin{align}
&(\sum_{t=1}^T(\mathbb{E}r_t[a_t] - \beta))^2 \leq (2\delta\mu T + 4/\mu)\Big( 2T  + 2\sqrt{\ln(|\Pi|) T (K+4)}  \Big)
\end{align} Substitute $\mu = \sqrt{\ln(|\Pi|)/(T(K+4))}$ back to the above equation, it is easy to verity that:
\begin{align}
\label{eq:fact_9}
(\sum_{t=1}^T(\mathbb{E} r_t[a_t] - \beta))^2 \leq O\big(T^{3/2} (K\ln(|\Pi|))^{1/2}\big).
\end{align}
 \end{myproof}

\subsection{Extension To High-Probability Bounds}
The regret bound and constraint violation bound of EXP4.R hold in expectation. In this section, we present an algorithm named EXP4.P.R, which achieves high-probability regret bound and constraint violation bound. The algorithm EXP4.P.R, as indicated by its name, is built on the well-known EXP4.P algorithm \citep{beygelzimer2011contextual}. In this section for the convenience of analysis, without loss of generality, we are going to assume that for any cost vector $c$ and risk vector $r$, we have $c[i]\in[-1,0]$, $r[i]\leq [-1,0], \forall i\in [K]$, 
and $\beta \in [-1,0]$.

The whole framework of the algorithm is similar to the one of EXP4.R, with \emph{only one} modification. For notation simplicity, let us define $\tilde{x}_t[i] = \hat{y}_t[i] + \lambda_t\hat{z}_t[i]$. Note that $\tilde{x}_t[i]$ is an unbiased estimate of $y_t[i] + \lambda_t z_t[i]$. EXP4.P.R modifies EXP4.R by replacing the update procedure for $w_{t+1}$ in Line~\ref{line:update_w} in Alg.~\ref{alg:exp4_constraints} with the following update step:
\begin{align}
w_{t+1}[i] = \frac{w_t[i]\exp(-\mu(\tilde{x}_t[i]- \kappa\sum_{k=1}^K \frac{\pi_i(s_t)[k]}{p_t[k]}))}{\sum_{j=1}^{|\Pi|}w_t[j]\exp(-\mu( \tilde{x}_t[j]- \kappa\sum_{k=1}^K \frac{\pi_j(s_t)[k]}{p_t[k]}))}, \nonumber
\end{align} where $\kappa$ is a constant that will be defined in the analysis of EXP4.P.R. We refer readers to Appendix~\ref{sec:exp4_p_r} for the full version of EXP4.P.R. Essentially, similar to EXP3.P and EXP4.P, we add an extra term $-\kappa \sum_{k=1}^K \frac{\pi_j(s_t)[k]}{p_t[k]}$ to $\tilde{x}_t[i]$. Though $\tilde{x}_t[i] -\kappa \sum_{k=1}^K \frac{\pi_j(s_t)[k]}{p_t[k]}$ is not an unbiased estimation of $y_t[i]+\lambda_t z_t[i]$ anymore, as shown in Lemma~\ref{lemma:high_prob} in Appendix~\ref{sec:analysis_exp4_P_R}, it enables us to upper bound $\sum_t \tilde{x}_t[i] -\kappa \sum_{k=1}^K \frac{\pi_j(s_t)[k]}{p_t[k]}$ using $\sum_t y_t[i] + \lambda_t z_t[i]$ with high probability.

We show that EXP4.P.R has the following performance guarantees:
\begin{theorem}
\label{them:exp4_P_R}
Assume $\mathcal{P}\neq \emptyset$. For any $\epsilon\in (0, 1/2)$, $\nu\in(0,1)$, set $\mu = \sqrt{\frac{\ln(|\Pi|)}{(3K+4)T}}$, $\kappa = \sqrt{\frac{(1+T^{\epsilon})\ln(|\Pi|/\nu)}{TK}}$, and $\delta = T^{-\epsilon+1/2}K$, with probability at least $1-\nu$, EXP4.P.R has the following property:
\begin{align}
& R_c =  O(\sqrt{T^{\epsilon - 1}K\ln(|\Pi|/\nu)}), \nonumber \\
& R_r =  O(T^{-\epsilon/2}\sqrt{K\ln(|\Pi|)}).
\end{align}
\end{theorem}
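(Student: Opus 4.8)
The plan is to follow the reduction behind Theorem~\ref{them:bandits} but to replace each ``in expectation'' passage by a high-probability concentration step, which is exactly the role of the added exploration penalty $-\kappa\sum_{k=1}^K \pi_i(s_t)[k]/p_t[k]$. First I would view EXP4.P.R as Alg.~\ref{alg:ol_constraints} with the negative-entropy mirror map, run on the \emph{biased} per-expert loss $g_t[i]=\tilde{x}_t[i]-\kappa\sum_{k=1}^K \pi_i(s_t)[k]/p_t[k]$ (with $\tilde{x}_t[i]=\hat{y}_t[i]+\lambda_t\hat{z}_t[i]$) together with online gradient ascent on $\lambda$. The weighted-majority inequality and the OGA inequality (Eqs.~\ref{eq:mirror_descent_analysis_1}--\ref{eq:ogd_analysis_1}) are purely deterministic statements about whatever loss sequence is fed in, so the same chaining that yielded the lemma preceding Theorem~\ref{them:bandits} gives, for every comparator pair $(w,\lambda)$, a path-wise bound of the form $\sum_t(\mathcal{L}_t(w_t,\lambda)-\mathcal{L}_t(w,\lambda_t))\le(\lambda^2+\ln|\Pi|)/\mu+(\mu/2)\sum_t(\text{second-moment terms})$, now in terms of the biased losses. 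Here the cross term $\kappa\sum_t\sum_i w_t[i]\sum_k\pi_i(s_t)[k]/p_t[k]=\kappa KT$ is deterministic and becomes the dominant price of the high-probability guarantee.

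The crux, and the step I expect to be the main obstacle, is Lemma~\ref{lemma:high_prob}: with probability at least $1-\nu$, simultaneously over all $i\in[|\Pi|]$ (by a union bound), the penalized biased cost lower-bounds the true one up to a confidence term, i.e. $\sum_t\big(\tilde{x}_t[i]-\kappa\sum_k\pi_i(s_t)[k]/p_t[k]\big)\le\sum_t\big(y_t[i]+\lambda_t z_t[i]\big)+\ln(|\Pi|/\nu)/\kappa$ (up to the sign conventions of the $[-1,0]$ rescaling). Because the importance-weighted estimators $\hat{y}_t[i],\hat{z}_t[i]$ are unbiased but heavy-tailed (magnitude up to $1/p_t[i]$), Azuma--Hoeffding is useless; instead I would control the martingale $\sum_t(\tilde{x}_t[i]-(y_t[i]+\lambda_t z_t[i]))$ by a Freedman/Bernstein exponential-moment argument in which the penalty $\kappa\sum_k\pi_i(s_t)[k]/p_t[k]$ is sized to dominate the per-step conditional variance $\mathbb{E}_{a_t\sim p_t}\hat{y}_t[i]^2\le\sum_k\pi_i(s_t)[k]/p_t[k]$ and its $\lambda_t^2$-weighted analogue for $\hat{z}_t$. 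A complication absent from ordinary EXP3.P/EXP4.P is the time-varying factor $\lambda_t$ multiplying $\hat{z}_t$: I would first use the damping term $-\delta\mu\lambda$ in $\mathcal{L}_t$ to derive an a priori range bound on $\lambda_t$, so that $\tilde{x}_t[i]$ has bounded magnitude and the variance proxy stays controlled; the inflation factor $1+T^{\epsilon}$ inside $\kappa$ is precisely what absorbs this $\lambda$-range, while the validity of the exponential-moment step forces $\kappa$ to be chosen compatibly with $\mu$.

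For the cost regret I would instantiate the deterministic bound at $w=w^*$ and $\lambda=0$, then substitute Lemma~\ref{lemma:high_prob} to pass from biased to true costs. Using $w^*\in\mathcal{P}$ (so each of its risk terms is at most $\beta$, making the surviving $\lambda_t z_t$ contributions harmless) and discarding $\sum_t\lambda_t^2$ via the same verification $\delta\ge\delta^2\mu^2+\ldots$ used in Theorem~\ref{them:ol}, the leading terms $\ln|\Pi|/\mu+\mu T(3K+4)$ give the $O(\sqrt{TK\ln(|\Pi|/\nu)})$ scale, while the exploration/confidence overhead contributes $O(\kappa KT+\ln(|\Pi|/\nu)/\kappa)$. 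Plugging in $\mu=\sqrt{\ln|\Pi|/((3K+4)T)}$ and $\kappa=\sqrt{(1+T^{\epsilon})\ln(|\Pi|/\nu)/(TK)}$ makes the $\kappa KT=\Theta(\sqrt{T^{1+\epsilon}K\ln(|\Pi|/\nu)})$ term dominate, and dividing by $T$ yields $R_c=O(\sqrt{T^{\epsilon-1}K\ln(|\Pi|/\nu)})$.

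Finally, for the risk bound I would reuse the quadratic device from Theorems~\ref{them:ol} and \ref{them:bandits}: lower-bound the cost regret by $-2T$ (costs lie in $[-1,0]$), set $\lambda=\big(\sum_t(r_t[a_t]-\beta)\big)/(\delta\mu T+2/\mu)$ in the combined high-probability inequality, and solve the resulting quadratic in $\sum_t(r_t[a_t]-\beta)$. With $\delta=T^{-\epsilon+1/2}K$ and the above $\mu$, the factor $(2\delta\mu T+4/\mu)=\Theta(T^{1-\epsilon}\sqrt{K\ln|\Pi|})$ multiplies an $O(T)$-order right-hand side, giving $(\sum_t(r_t[a_t]-\beta))^2=O(T^{2-\epsilon}K\ln|\Pi|)$ and hence, after dividing by $T$, $R_r=O(T^{-\epsilon/2}\sqrt{K\ln|\Pi|})$. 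This exposes the advertised tradeoff: increasing $\epsilon$ shrinks $R_r$ while enlarging $R_c$, both approaching $\tilde{O}(T^{-1/4})$ as $\epsilon\to 1/2$. The only genuinely new difficulty over the in-expectation Theorem~\ref{them:bandits} is the concentration of the second paragraph; everything else is the same algebra carried out path-wise.
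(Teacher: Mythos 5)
Your plan reproduces the paper's route almost step for step: the deterministic weighted-majority plus online-gradient-ascent bound applied to the biased losses $\tilde{x}_t[i]-\kappa\sum_k\pi_i(s_t)[k]/p_t[k]$; the a priori bound $\lambda_t\le|\beta|/(\delta\mu)$ obtained from the damping term (the paper's Lemma~\ref{lemma:lambda_upper_bound}); the exponential-moment concentration of Lemma~\ref{lemma:high_prob}, rescaled by $1+\lambda_m$ and union-bounded over $\Pi$, with the penalty sized to cancel the conditional variance $\sum_k\pi_i(s_t)[k]/p_t[k]$; the deterministic cross term $\kappa KT$; the instantiation at $w^*$, $\lambda=0$; and the quadratic device with $\lambda=\big(\sum_t(r_t[a_t]-\beta)\big)/(\delta\mu T+2/\mu)$. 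All of that matches the paper, including the final rates and the role of $\epsilon$.

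The one place where your sketch has a genuine gap is the claim that the rest is ``the same algebra carried out path-wise.'' In Theorem~\ref{them:bandits} the second-moment term $\sum_i w_t[i]\hat{y}_t[i]^2$ is only bounded by $K$ \emph{in expectation}; path-wise it is $\Theta(1/p_t[a_t])$ and, with the dual variable, the analogous term is $\Theta((1+\lambda_t)^2/p_t[a_t])$, which is unbounded. So you cannot simply quote $\mu T(3K+4)$ as the price of the variance term in a high-probability statement. The paper handles this with an EXP3.P-style self-bounding step (Eq.~\ref{eq:fact_12}): since all entries of $\hat{c}_t+\lambda_t\hat{r}_t-\kappa/p_t$ lie in $[-(1+\lambda_t+\kappa),0]$, the square is at most the range times the negated value, which equals $K(1+\lambda_t+\kappa)$ times the biased loss of the uniform policy; the uniform policy is assumed to be in $\Pi$, so this is dominated by the biased loss of the minimizer $\hat{w}$ and can be folded back into the comparator term on the left-hand side, provided $1-\frac{\mu K}{2}(1+\lambda_t+\kappa)\ge 0$ --- which is exactly where the additional lower-bound condition on $\delta$ (beyond $\delta\le 1/\mu^2$) enters, and where the constant $3K+4$ in $\mu$ comes from. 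Without this step your path-wise bound does not close, so you should add it explicitly; everything else in your outline is sound.
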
 The above theorem introduces a trade-off between the regret of cost and the constraint violation. As $\epsilon\to 0$, we can see that the regret of cost approaches to the  near-optimal one $\sqrt{TK\ln(|\Pi|)}$, but the average risk constrain violation approaches to a constant. Based on specific applications, one may set a specific $\epsilon \in (0,0.5)$ to balance the regret and the constraint violation. For instance, for $\epsilon = 1/3$, one can show that the cumulative regret is $O(T^{2/3}\sqrt{K\ln(|\Pi|)})$ and the average constraint violation is $\tilde{O}(T^{-1/6})$. Note that if one simply runs EXP4.R proposed in the previous section, it is impossible to achieve the regret rate $O(T^{2/3}\sqrt{\ln(|\Pi|)})$ in a high probability statement. As shown in \citep{auer2002nonstochastic},  for EXP4 the cumulative regret on the order of $O(T^{3/4})$ was possible.\footnote{EXP4.R becomes the same as EXP4 when we set all risks and $\beta$ to zeros.}

The difficult of achieving a high probability statement with near-optimal cumulative regret $O(\sqrt{TK\ln(|\Pi|)})$ and cumulative constraint violation rate $\tilde{O}(T^{3/4})$ (the combination that matches to the state-of-art in the full information setting) is from the  Lagrangian dual variable $\lambda$. The variance of $\hat{z}_t[i]$ is proportional to $1/p_t[a_t]$. With $\lambda_t$, the variance of $\lambda_t \hat{z}_t[i]$ scales as $\lambda_t^2 /p_t[a_t]$.  As we show in Lemma~\ref{lemma:lambda_upper_bound} in Appendix~\ref{sec:analysis_exp4_P_R}, $\lambda_t$ could be as large as $|\beta|/(\delta\mu)$. Depending on the value of $\delta, \mu$, $\lambda_t$ could be large, e.g., $\Theta(\sqrt{T})$ if $\delta$ is a constant and $\mu = \Theta(1/\sqrt{T})$. Hence compared to EXP4.P, the Lagrangian dual variable in EXP4.P.R makes it more difficult to control the variance of $\tilde{x}_t$, which is an unbiased estimation of $y_t[i]+\lambda_t z_t[i]$. This is exactly where the trade-off $\epsilon$ comes from: we can tune the magnitude of $\delta$ to control the variance of $\tilde{x}_t$ and further control the trade-off between regret and risk violation . How to achieve total regret $O(\sqrt{TK\ln(|\Pi|)})$ and cumulative constraint violation $O(T^{3/4})$ in high probability is still an open problem.


\section{Conclusion}
In this work we study the problem of adversarial contextual bandits with adversarial risk constraints. We introduce the concept of risk constraints for arms and the goal is to satisfy the long-term risk constraint while achieve near-optimal regret in terms of reward.  The proposed two algorithm, EXP4.R and EXP4.P.R, are built on the existing EXP4 and EXP4.P algorithms. EXP4.R achieves near-optimal regret
and satisfies the long-term constraint in expectation while EXP4.P.R achieves similar theoretical bounds with high probability. We introduced a tradeoff in the analysis of EXP4.P.R which shows that one can trade the constraint violation for regret and vice versa. The regret bound and the constraint bound of EXP4.P.R does not match the state-of-art results of online learning with constraints due to the fact that the Lagrange dual parameter in worst case can significantly increase the variance of the algorithm. 

Same as EXP4 and EXP4.P, the computational complexity of a simple implementation of our algorithms per step is linear with respect to the size of the policy class. This drawback makes it difficulty to directly apply our algorithms to huge policy classes. Directly designing computational efficient algorithms for risk-aware adversarial contextual bandits might be hard, but one interesting future direction is to look into the hybrid case (i.e., i.i.d contexts but adversarial rewards and risks). In the hybrid case, it maybe possible to design computational efficient algorithms by leveraging the recent work in \citep{rakhlin2016bistro}
\medskip

\bibliographystyle{abbrvnat}
\bibliography{UAI2015}

\newpage
\onecolumn
\appendix
\section*{Appendix}

\section{Proof of Proposition~\ref{prop:wrong_decision_set}}
\label{sec:prop_wrong_decision_set}
\begin{proof}
The proof is mainly about adapting the specific two-player game presented in \citep{mannor2009online} to the general online convex programming setting with adversarial constraints. We closely follow the notations in the example from Proposition 4 in  \citep{mannor2009online}.

Let us define the decision set $\mathcal{X} = \Delta([1,2])$, namely a 2-D simplex. We design two different loss functions: $\ell^1(x) = [-1,0]x$, and $\ell^2(x) = [-1,1]x$ (here $[a,b]$ stands for a 2-d row vector and hence $[a,b] x$ stands for the regular vector inner product). We also design two different constraints as: $f^1(x) = [-1,-1]x\leq 0$ and $f^2(x) = [1,-1]x\leq 0$. Note that both $\ell$ and $f$ are linear functions with respect $x$, hence they are convex loss functions and constraints with respect to $x$. The adversary picks loss functions among $\{\ell^1, \ell^2\}$ and constraints among $\{f^1, f^2\}$ and will generate the following sequence of loss functions and constraints. Initialize a counter $k = 0$, then:
\begin{enumerate}
	\item while $k=0$ or $\frac{1}{t-1}\sum_{i=1}^{t-1} x_i[1] > 3/4$, the adversary set $\ell_t = \ell^2(x)$ and $f_t = f^2(x)$, and set $k := k + 1$.
	\item For next $k$ steps, the adversary set $\ell_t = \ell^1(x)$ and $f_t = f^1(x)$. Then reset $k = 0$ and go back to step 1.
\end{enumerate}

For any time step $t$, let us define $\hat{q}_t = \frac{1}{t} \sum_{i=1}^t \mathbbm{1}(f_i = f^2)$, namely the fraction of the adversary picking the second type of constraint. Let us define $\hat{\alpha}_t = \sum_{i=1}^t x_i[1] / t$. Given any $\hat{q}_t$, we see that $\mathcal{O}'$ can be defined as
\begin{align}
\mathcal{O}' &= \{x\in \Delta([1,2]):  \hat{q}_t [1,-1]x + (1-\hat{q}_t)[-1,-1]x\leq 0)\}  \nonumber\\
& = \{x\in\Delta([1,2]): [2\hat{q}_t - 1, -1]x \leq 0\} = \{x\in \Delta([1,2]): 2\hat{q}_t x[1] - 1 \leq 0\},
\end{align} and the minimum loss the learner can get in hindsight with decisions restricted to $\mathcal{O}'$ is:
\begin{align}
r_t^{min}& = \min_{x\in\mathcal{O}'} (1-\hat{q}_t)[-1,0]x + \hat{q}_t [-1,1]x \nonumber\\
& = \begin{cases}  -1 & 0\leq \hat{q}_t \leq 1/2 \\
				-1/2 - 1/(2\hat{q}_t) + \hat{q}_t & 1/2\leq \hat{q}_t \leq 1 \end{cases}
\end{align}
The cumulative constraint violation at time step $t$ can be computed as $\sum_{i=1}^t f_i(x_i) =\sum_{i=1}^t \mathbbm{1}(f_i= f^1) [-1,-1]x_i + \mathbbm{1}(f_i = f^2)[1,-1]x_i$. We want to show that no matter what strategy the learner uses, as long as $\frac{1}{t}\limsup_{i\to\infty}\sum_i f_i(x_i) \leq 0$, we will have  $\limsup_{t\to\infty}(\sum_{i=1}^t \ell_i(x_i)/t)- r^{min}_t > 0$.

Following a similar argument from \citep{mannor2009online}, we can show that Step 2 is entered an infinite number of times. To show this, assume that step 2 only enters finite number of times. Hence as the game keeps staying in Step 1, the fraction of the adversary picking the second constraint $f^2$ approaches to one ($\hat{q}_t\to 1$), we will have as $t$ approaches to infinity,
\begin{align}
&\lim_{t\to\infty}\frac{1}{t}\sum_{i=1}^t f_i(x_i) = \lim_{t\to\infty}\frac{1}{t}\sum_{i=1}^t \mathbbm{1}(f_i= f^1) [-1,-1]x_i + \frac{1}{t}\sum_{i=1}^t\mathbbm{1}(f_i = f^2)[1,-1]x_i \nonumber\\
&= \lim_{t\to\infty}\frac{1}{t}\sum_{i=1}^t\mathbbm{1}(f_i = f^2)[1,-1]x_i = \lim_{t\to\infty} \frac{1}{t}\sum_{i=1}^t[1,-1]x_i = \lim_{t\to\infty}[1,-1](\frac{1}{t}\sum_{i=1}^t x_i ).
\end{align} Since $\sum_{i=1}^t x_i / t \in \Delta([1,2])$, we must have $\hat{\alpha}_t = \sum_{i=1}^t x_i[1]/t <= 1/2$ to ensure that the long-term constraint is satisfied: $\lim_{t\to\infty}\frac{1}{t}\sum_{i=1}^t f_i(x_i)\leq 0$. But when $\hat{\alpha}_t \leq 1/2$, the condition of entering Step 1 is violated and we must enter step 2. Hence step 2 is entered infinite number of times.  In particular, there exist infinite sequences $t_i$ and $t_i'$ such that $t_i < t_i' < t_{t+1}$, and the adversary picks $f^2, \ell^2$ in $(t_i, t_i']$ (Step 1) and the adversary picks $f^1, \ell^1$ in $(t_i', t_{i+1}]$ (Step 2). Since step 1 and step 2 executes the same number of steps (i.e., using the counter $k$'s value), we must have $\hat{q}_{t_i} = 1/2$ and $r^{min}_{t_i} = 1$. Furthermore, we must have $t_i' \geq t_{t+1}/2$. Note that $\hat{\alpha}_{t_i'} \leq 3/4$ since otherwise the adversary would be in step $1$ at time $t_i'+1$.  Thus, during the first $t_{i+1}$ steps, we must have:
\begin{align}
\sum_{j=1}^{t_{i+1}} x_j[1]  =  \sum_{j=1}^{t_i'} x_j[1] + \sum_{j=t_i'+1}^{t_{i+1}} x_j[1]  \leq \frac{3}{4} t_{i'} + (t_{i+1} - t_{i}') = t_{i+1} - t_i'/4 \leq \frac{7}{8} t_{i+1}.
\end{align}
It is easy to verify that $\frac{1}{t_{i+1}}\sum_{t=1}^{t_{i+1}} \ell_t(x_t) \geq -\frac{1}{t_{i+1}}\sum_{t=1}^{t_{i+1}} x_t[1] \geq -\frac{7}{8}$.
Hence, simply let $i\to\infty$, we have:
\begin{align}
\limsup_{t\to\infty} (\frac{1}{t}\sum_{i=1}^t \ell_i(x_i) - r^{min}_{t}) \geq  -7/8 + 1 = 1/8.
\end{align} Namely, we have shown that for cumulative regret, regardless what sequence of decisions $x_1,...,x_t$ the learner has played, as long as it needs to satisfy $\lim\sup_{t\to\infty}\frac{1}{t}\sum_{i=1}^t f_i(x_i) \leq 0$, we must have:
\begin{align}
\limsup_{t\to\infty} \big(\sum_{i=1}^t \ell_i(x_i) - \min_{x\in\mathcal{O}'} \sum_{i=1}^t \ell_i(x) \big) \geq t/8 = \Omega(t).
\end{align} Hence we cannot guarantee to achieve no-regret when competing agains the decisions in $\mathcal{O}'$ while satisfying the long-term constraint.

\end{proof}

\section{Analysis of Alg.~\ref{alg:ol_constraints} and Proof Of Theorem~\ref{them:ol}}
\begin{myproof}[Proof of Theorem~\ref{them:ol}]
Since the algorithm runs online mirror descent on the sequence of loss $\{\mathcal{L}_t(x,\lambda_t)\}_t$ with respect to $x$, using the existing results of online mirror descent (Theorem 4.2 and Eq. 4.10 from \cite{bubeck2015convex}), we know that for the sequence of $\{x_t\}_t$:
\begin{align}
\label{eq:mirror_descent_analysis}
&\sum_{t=1}^T(\mathcal{L}_t(x_t, \lambda_t) - \mathcal{L}_t(x, \lambda_t))\leq \frac{D_R(x, x_1)}{\mu} + \frac{\mu}{2\alpha}\sum_{t=1}^T\|\nabla_{x}\mathcal{L}(x_t,\lambda_t)\|_{*}^2.
\end{align}
Also, we know that the algorithm runs online gradient ascent on the sequence of loss $\{\mathcal{L}_t(x_t, \lambda)\}_t$ with respect to $\lambda$, using the existing analysis of online gradient descent \citep{Zinkevich2003_ICML}, we have for the sequence of $\lambda_t$:
\begin{align}
\label{eq:ogd_analysis}
&\sum_{t=1}^T\mathcal{L}_t(x_t,\lambda) - \sum_{t=1}^T\mathcal{L}_t(x_t,\lambda_t) \leq \frac{1}{\mu}\lambda^2 + \frac{\mu}{2}\sum_{t=1}^T \big(\frac{\partial \mathcal{L}_t(w_t,\lambda_t)}{\partial \lambda_t}\big)^2,
\end{align}

Note that for $(\nabla_{\lambda}\mathcal{L}_t(x_t,\lambda_t))^2 = (f_t(x_t) - \delta\mu\lambda_t)^2\leq 2f_t^2(x_t) + 2\delta^2\mu^2\lambda_t^2 \leq 2D^2 + \delta^2\mu^2\lambda_t^2$. Similarly for $\|\nabla_x\mathcal{L}_t(x_t,\lambda_t)\|_*^2$, we also have:
\begin{align}
&\|\nabla_x\mathcal{L}_t(x_t,\lambda_t)\|_*^2 \leq 2\|\nabla\ell_t(x_t)\|_*^2 + 2\|\lambda_t\nabla f_t(x_t)\|_*^2 \leq 2G^2(1+\lambda_t^2),
\end{align} where we first used triangle inequality for $\|\nabla_x\mathcal{L}_t(x_t,\lambda_t)\|_*$ and then use the inequality of $2ab\leq a^2+b^2, \forall a,b\in \mathcal{R}^+$. We also assume that the norm of the gradients are bounded as $max(\|\nabla\ell_t(x_t)\|_*, \|\nabla f_t(x_t)\|_*) \leq G\in\mathcal{R}^+$.
Now sum Inequality~\ref{eq:mirror_descent_analysis} and \ref{eq:ogd_analysis} from $t=0$ to $T$, we get:
\begin{align}
&\sum_{t}\mathcal{L}_t(x_t,\lambda) - \mathcal{L}_t(x,\lambda_t)\nonumber\\
&\leq\frac{2D_R(x,x_0) + \lambda^2}{2\mu} + \sum_{t}\mu(D^2 +  \delta^2\mu^2\lambda_t^2) + \sum_t\frac{\mu G^2}{\alpha}(1+\lambda_t^2) \nonumber\\
& = \frac{2D_R(x,x_0) + \lambda^2}{2\mu} + T\mu(D^2 + \frac{ G^2}{\alpha})  + \mu(\delta^2\mu^2 + \frac{ G^2}{\alpha})\sum\lambda_t^2.
\end{align}

Using the saddle-point convex and concave formation for $\mathcal{L}_t$, we have:
\begin{align}
&\sum_t\mathcal{L}_t(x_t, \lambda) - \mathcal{L}_t(x,\lambda_t) = \sum_t(\ell_t(x_t) - \ell_t(x)) + \sum_t(\lambda f_t(x_t) - \lambda_t f_t(x)) + \frac{\delta\mu}{2}\sum\lambda_t^2 - \frac{\delta\mu T}{2}\lambda^2 \nonumber \\
& \leq \frac{2B + \lambda^2}{2\mu} + T\mu(D^2 + \frac{ G^2}{\alpha}) + \mu(\delta^2\mu^2 + \frac{ G^2}{\alpha})\sum\lambda_t^2.
\end{align} Note that based on the setting of $\delta$ and $\mu$, we can show that $\delta\geq \delta^2\mu^2 + G^2/\alpha$. This is because $\delta^2\mu^2 + G^2/\alpha = \frac{4G^4 B}{\alpha^2 T(D^2+G^2/\alpha)} +G^2/\alpha \leq \frac{4G^2B}{T\alpha} + G^2/\alpha \leq 2G^2/\alpha$, where we assume that $T$ is large enough such that $T\geq 4B$.\footnote{Note that here for analysis simplicity we consider asymptotic property of the algorithm and assume $T$ is large enough and particularly larger than any constant. We don't necessarily have to assume $T\geq 4B$ here because we can explicitly solve the inequality $\delta\geq \delta^2\mu^2+G^2/\alpha$ to find the valid range of $\delta$, as \citep{mahdavi2012trading} did. }

Since we have $\delta \geq \delta^2\mu^2 + G^2/\alpha$, we can remove the term $\sum_{t}\lambda_t^2$ in the above inequality.
\begin{align}
&\sum_t(\ell_t(x_t) - \ell_t(x)) + \sum_t(\lambda f_t(x_t)  - \lambda_t f_t(x))- (\frac{\delta\mu T}{2} + \frac{1}{2\mu})\lambda^2 \leq \frac{2B}{2\mu} + T\mu(D^2 + G^2/\alpha).
\end{align}
Now set $x = x^*$, and set $\lambda = 0$, since $f_t(x^*)\leq 0$ for all $t$, we get:
\begin{align}
\sum_t(\ell_t(x_t) - \ell_t(x^*)) \leq \frac{2B}{2\mu} + T\mu(D^2 + G^2/\alpha) \leq 2\sqrt{BT(D^2+G^2/\alpha)},
\end{align} where we set $\mu = \sqrt{B/(T(D^2+G^2/\alpha))}$.

To upper bound $\sum_t f_t(x_t)$, we first note that we can lower bound $\sum_{t=1}^T (\ell_t(x_t) - \ell_t(x))$ as $\sum_{t=1}^T(\ell_t(x_t) - \ell_t(x))\geq -2FT$. Now let us assume that $\sum_t f_t(x_t)> 0$ (otherwise we are done). We set $\lambda = (\sum_t f_t(x_t))/(\delta\mu T + 1/\mu)$, we have:
\begin{align}
&\frac{(\sum_t f_t(x_t))^2}{2\delta\mu T + 1/\mu} \leq \frac{2B}{2\mu} + T\mu(D^2 + G^2/\alpha) + \sum_t(\ell_t(x^*) - \ell_t(x_t)) \nonumber \\
&\leq 2\sqrt{BT(D^2+G^2/\alpha)}  + 2FT
\end{align}

Substitute $\mu = \sqrt{B/(T(D^2+G^2/\alpha))}$ into the above inequality, we have:
\begin{align}
&(\sum_{t=1}^T f_t(x_t))^2 \leq 2\sqrt{BT(D^2+G^2/\alpha)}(2\delta\mu T + 1/\mu) + 2FT(2\delta\mu T + 1/\mu) \nonumber \\
& \leq \frac{8G^2}{\alpha}B T + 2T(D^2+\frac{D^2}{\alpha}) + 2T(D^2+\frac{G^2}{\alpha}) + T^{3/2}\sqrt{8F^2G^2/\alpha}.
\end{align} Take the square root on both sides of the above inequality and observe that $T^{3/2}\sqrt{8F^2G^2/\alpha}$ dominates the RHS of the above inequality, we prove the theorem.
\end{myproof}

\section{Analysis of EXP4.R}
\label{sec:exp4_R_analysis}
In this section we provide the full proof of theorem~\ref{them:bandits}.
\begin{myproof}[Proof of Theorem~\ref{them:bandits}]
We first present several known facts. First we have that for $w_t^T\hat{z}_t$:
\begin{align}
\label{eq:fact_1}
w_t^T\hat{z}_t = \mathbb{E}_{i\sim w_t}\hat{z}_t[i] = \mathbb{E}_{i\sim w_t}\pi_i(s_t)^T\hat{r}_t = \mathbb{E}_{i\sim w_t}\mathbb{E}_{j\sim \pi_i(s_t)}\hat{r}_t[j] = \mathbb{E}_{j\sim p_t}\hat{r}_t[j] = r_t[a_t] \leq 1.
\end{align} For $w_t^T\hat{y}_t$, we have:
\begin{align}
\label{eq:fact_2}
w_t^T\hat{y}_t = \mathbb{E}_{i\sim w_t}\hat{y}_t[i] = \mathbb{E}_{i\sim w_t}\pi_i(s_t)^T\hat{c}_t = \mathbb{E}_{j\sim p_t}\hat{c}_t[j] = c_t[a_t] \leq 1.
\end{align}

For $\mathbb{E}_{a_t\sim p_t}(w_t^T\hat{z}_t - \beta)^2$, we then have:
\begin{align}
\mathbb{E}_{a_t\sim p_t}(w_t^T\hat{z}_t - \beta)^2 = \mathbb{E}_{a_t\sim p_t}(r_t[a_t] - \beta)^2 \leq \mathbb{E}_{a_t}2 r_t[a_t]^2 + 2\beta^2\leq 4.
\end{align}
For $\mathbb{E}_{a_t\sim p_t} \hat{y}_t$, we have:
\begin{align}
\mathbb{E}_{a_t\sim p_t} \hat{y}_t[j] = \pi_j(s_t)^T\mathbb{E}_{a_t\sim p_t} \hat{c}_t = \pi_j(s_t)^Tc_t = y_t[j],
\end{align} which gives us $\mathbb{E}_{a_t\sim p_t}\hat{y}_t = y_t$. Similarly we can easily verify that $\mathbb{E}_{a_t\sim p_t}\hat{z}_t = z_t$.

For $\sum_{i=1}^{|\Pi|}w_t[i]\hat{y}_t[i]^2$, we have:
\begin{align}
&\sum_{i=1}^{|\Pi|}w_t[i]\hat{y}_t[i]^2 = \mathbb{E}_{i\sim w_t}\hat{y}_t[i]^2 = \mathbb{E}_{i\sim w_t}(\pi_j(s_t)^T\hat{c}_t)^2 = \mathbb{E}_{i\sim w_t}(\mathbb{E}_{j\sim \pi_i(s_t)}\hat{c}_t[j])^2 \nonumber \\
& \leq \mathbb{E}_{i\sim w_t}\mathbb{E}_{j\sim\pi_{i}(s_t)}(\hat{c}_t[j])^2 = \mathbb{E}_{j\sim p_t}(\hat{c}_t[j])^2 = \frac{c_t[a_t]^2}{p_t[a_t]}.
\end{align} Hence, for $\mathbb{E}_{a_t\sim p_t}\sum_{i=1}^{|\Pi|}w_t[i]\hat{y}_t[i]^2$ we have:
\begin{align}
\label{eq:fact_3}
\mathbb{E}_{a_t\sim p_t}\sum_{i=1}^{|\Pi|}w_t[i]\hat{y}_t[i]^2 \leq \mathbb{E}_{a_t\sim p_t} \frac{c_t[a_t]^2}{p_t[a_t]} = \sum_{k=0}^K c_t[k]^2 \leq K.
\end{align}
Similarly, for $\sum_{i=1}^{|\Pi|}w_t[i]\hat{z}_t[i]^2$, we have:
\begin{align}
\sum_{i=1}^{|\Pi|}w_t[i]\hat{z}_t[i]^2 = \mathbb{E}_{i\sim w_t}(\pi_i(s_t)^T\hat{r}_t)^2 \leq \mathbb{E}_{j\sim p_t}(\hat{r}_t[j])^2 = \frac{r_t[a_t]^2}{p_t[a_t]},
\end{align} and
\begin{align}
\label{eq:fact_4}
\mathbb{E}_{a_t\sim p_t}\sum_{i=1}^{|\Pi|}w_t[i]\hat{z}_t[i]^2 \leq K.
\end{align}
Now we are going to take expectation with respect to the randomized decisions $\{a_i\}$ on both sides of Inequality.~\ref{eq:before_expectation_tmp}. Fix time step $t$, conditioned on $a_1,...,a_{t-1}$, we have:
\begin{align}
&\mathbb{E}_{a_t} \Big[ \mathcal{L}_t(w_t,\lambda) -  \mathcal{L}_t(w,\lambda_t) \Big] \nonumber \\
&= \mathbb{E}_{a_t} \Big[ c_t[a_t] + \lambda(r_t[a_t] - \beta) - \frac{\delta\mu}{2}\lambda^2 - \hat{y}_t^T w - \lambda_t(\hat{z}_t^Tw - \beta) + \frac{\delta\mu}{2}\lambda_t^2  \Big] \nonumber \\
& = \mathbb{E}_{a_t}c_t[a_t] + \lambda (\mathbb{E}_{a_t}r_t[a_t] - \beta) - \frac{\delta\mu}{2}\lambda^2 - y_t^Tw - \lambda_t(z_t^Tw - \beta) + \frac{\delta\mu}{2}\lambda_t^2 .\nonumber \\
& \;\;\;\;\;\;\; (\textit{Used fact that $\mathbb{E}_{a_t\sim p_t}\hat{y}_t = y_t$ and $\mathbb{E}_{a_t\sim p_t}\hat{z}_t = z_t$ } ) \nonumber
\end{align}
Take the expectation with respect to $a_1,...,a_T$ on the LHS of Inequality~\ref{eq:before_expectation_tmp}, we have:
\begin{align}
\label{eq:fact_5}
&\mathbb{E}_{\{a_t\}_t}\sum_{t=1}^T\Big[ \mathcal{L}_t(w_t,\lambda) -  \mathcal{L}_t(w,\lambda_t) \Big] = \sum_{t=1}^T\mathbb{E}_{a_1,...,a_{t-1}}\mathbb{E}_{a_t|a_1,...,a_{t-1}}\Big[ \mathcal{L}_t(w_t,\lambda) -  \mathcal{L}_t(w,\lambda_t) \Big] \nonumber\\
& = \sum_{t=1}^T\big[\mathbb{E}c_t[a_t] + \lambda(\mathbb{E}r_t[a_t] - \beta)  - y_t^Tw - \lambda_t(z_t^Tw - \beta) + \frac{\delta\mu}{2}\lambda_t^2\big] -  \frac{\delta\mu T}{2}\lambda^2
\end{align}
Now take the expectation with respect to $a_1,...,a_T$ on the RHS of Inequality~\ref{eq:before_expectation_tmp} (we use $\mathbb{E}_{a_t|-a_t}$ to represent the expectation over the distribution of $a_t$ conditioned on $a_1,...,a_{t-1}$), we have:
\begin{align}
\label{eq:fact_6}
&\frac{\lambda^2}{\mu} + \frac{\ln(|\Pi|)}{\mu} + \mu\sum_{t=1}^T\Big(\mathbb{E}_{a_t|a_{-t}}(\sum_{i=1}^{|\Pi|}w_t[i]\hat{y}_t[i]^2 + \lambda_t^2w_t[i]\hat{z}_t[i]^2 )    +\mathbb{E}_{a_t|a_{-t}}(w_t^T\hat{z}_t - \beta)^2 + \delta^2\mu^2\lambda_t^2    \Big) \nonumber \\
& \leq \frac{\lambda^2}{\mu} + \frac{\ln(|\Pi|)}{\mu} + \mu\sum_{t=1}^T\Big( K  + \lambda_t^2 K     +  4 + \delta^2\mu^2\lambda_t^2    \Big) \nonumber \\
& \;\;\;\;\;\; \textit{(Used Eq.~\ref{eq:fact_3} and \ref{eq:fact_4} )} \nonumber \\
&  = \frac{\lambda^2}{\mu} + \frac{\ln(|\Pi|)}{\mu} + \mu T(K+4) + \mu(K+\delta^2\mu^2)\sum_{t=1}^T\lambda_t^2.
\end{align} Note that based on the setting of $\delta$ and $\mu$, we can show that $\delta\geq 2K+2\delta^2\mu^2$. This is because $2K+2\delta^2\mu^2 = 2K +  18K^2\ln(|\Pi|)/(T(K+4)) \leq 2K + 18K\ln(|\Pi|)/T\leq 3K$, where for simplicity we assume that $T$ is large enough ($T\geq 18\ln(|\Pi|)$).

Chain Eq.~\ref{eq:fact_5} and ~\ref{eq:fact_6} together and get rid of the terms that have $\lambda_t$ (due to the fact that $\delta\geq 2K+2\delta^2\mu^2$) and rearrange terms, we get:
\begin{align}
\label{eq:fact_7}
&\mathbb{E}\big(\sum_{t=1}^T c_t[a_t] -\sum_{t=1}^T y_t^Tw\big) + \sum_{t=1}^T\big(\lambda(\mathbb{E}r_t[a_t] - \beta) - \lambda_t(z_t^Tw - \beta)\big) - (\frac{\delta\mu T}{2} + \frac{1}{\mu})\lambda^2 \nonumber \\
& \leq \frac{ \ln(|\Pi|)}{\mu} + \mu T(K+4).
\end{align} The above inequality holds for any $w$. 
Substitute $w^*$  into Eq.~\ref{eq:fact_7}, we get:
\begin{align}
&\mathbb{E}\big(\sum_{t=1}^T c_t[a_t] -\sum_{t=1}^T y_t^Tw^*\big) + \sum_{t=1}^T\lambda(\mathbb{E}r_t[a_t] - \beta) - (\frac{\delta\mu T}{2} + \frac{1}{\mu})\lambda^2 \nonumber \\
& \leq \frac{ \ln(|\Pi|)}{\mu} + \mu T(K+4) \nonumber.
\end{align} Now let us set $\lambda = 0$, for regret, we get:
\begin{align}
\label{eq:fact_8}
&\mathbb{E}\big(\sum_{t=1}^T c_t[a_t] -\sum_{t=1}^T y_t^Tw^*\big) \leq \ln(|\Pi|)/\mu + \mu T(K+4)\nonumber\\
& \leq 2\sqrt{\ln(|\Pi|) T(K+4)} = O(\sqrt{TK\ln(|\Pi|)}),
\end{align} where  $\mu = \sqrt{\ln(|\Pi|)/T(K+4)}$.

For constraints $\sum (\mathbb{E}r_t[a_t] - \beta)$, let us assume that $\sum \mathbb{E}(r_t[a_t] - \beta) > 0$ (otherwise we are done), and substitute $\lambda =  (\sum \mathbb{E}r_t[a_t] - \beta) / (\delta\mu T + 2/\mu)$ into inequality~\ref{eq:fact_8} (note that $\lambda > 0$). Using the fact that $\mathbb{E}\big(\sum_{t=1}^T c_t[a_t] -\sum_{t=1}^T y_t^Tw^*\big) \geq -2T$,  we get:
\begin{align}
(\sum_{t=1}^T(\mathbb{E}r_t[a_t] - \beta))^2 \leq (2\delta\mu T + 4/\mu)\big( 2T + 2\sqrt{\ln(|\Pi|) T (K+2+2\beta^2)}  \big)
\end{align} Substitute $\mu = \sqrt{\ln(|\Pi|)/T(K+4)}$ and $\delta = 3K$ back to the above equation, it is easy to verity that:
\begin{align}
\label{eq:fact_9}
(\sum_{t=1}^T(\mathbb{E} r_t[a_t] - \beta) )^2 \leq 12K\sqrt{\frac{\ln(|\Pi|)}{K+4}}T^{3/2} + 12K\ln(|\Pi|)T + 8T^{3/2}\sqrt{\frac{K+4}{\ln(|\Pi|)}} + 8T(K+4).
\end{align} Since we consider the asymptotic property when $T\to\infty$, we can see that the LHS of the above inequality is dominated by $\sqrt{K\ln(|\Pi|)}T^{3/2}$. Hence,
\begin{align}
(\sum_{t=1}^T(\mathbb{E} r_t[a_t] - \beta) )^2 \leq O(\sqrt{K\ln(|\Pi|)}T^{3/2}).
\end{align} Take the square root on both sides of the above inequality, we prove the theorem.
\end{myproof}

\section{Algorithm and Analysis of EXP4.P.R}
\label{sec:exp4_p_r}
\subsection{Algorithm}
We present the EXP4.P.R algorithm in Alg.~\ref{alg:exp4p_constraints}.
\begin{algorithm}
\caption{Exp4.P with Risk Constraints (EXP4.P.R)}
 \label{alg:exp4p_constraints}
\begin{algorithmic}[1]
  \STATE {\bfseries Input:} Policy Set $\Pi$
  \STATE Initialize $w_0 = [1/N,...,1/N]^T$ and $\lambda = 0$.
 \FOR {t = 0 to T}
    \STATE Receive context $s_t$.
    \STATE Query each experts to get the sequence of advice $\{\pi_i(s_t)\}_{i=1}^N$.
    \STATE Set $p_t = \sum_{i=1}^N w_t[i]\pi_i(s_t)$.
    \STATE Draw action $a_t$ randomly according to probability $p_t$.
    \STATE Receive cost $c_t[a_t]$ and risk $r_t[a_t]$.
    \STATE Set the cost vector $\hat{c}_t\in R^K$ and the risk vector $\hat{r}_t\in R^K$ as: \label{line:unbiased_cost_risk}
    \begin{align}
    \hat{c}_t[i] = \frac{c_t[i]\mathbbm{1}(a_t = i) }{p_t[i]}, \;\;\; \hat{r}_t[i] = \frac{r_t[i]\mathbbm{1}(a_t = i) }{p_t[i]},\forall i\in\{1,2,...,K\}.
    \end{align}
    \STATE For each expert $j$, set:  \label{line:unbiased_expert_cost_risk}
    \begin{align}
    \hat{y}_t[j] = \pi_j(s_t)^T\hat{c}_t,\;\;\; \hat{z}_t[j] = \pi_j(s_t)^T\hat{r}_t, \forall j\in\{1,2...,N\}.
    \end{align}
    \STATE Set $\tilde{x}_t = \hat{y}_t + \lambda_t \hat{z}_t$.
    \STATE Update $w_{t+1}$ as: \label{line:update_weight_p}
    \begin{align}
    w_{t+1}[i] = \frac{w_t[i]\exp(-\mu (\tilde{x}_t[i]- \kappa \sum_{k=1}^K \frac{\pi_i(s_t)[k]}{p_t[k]}))}{\sum_{j=1}^{|\Pi|}w_t[j]\exp(-\mu ( \tilde{x}_t[j]- \kappa \sum_{k=1}^K \frac{\pi_j(s_t)[k]}{p_t[k]}))}, \nonumber
    \end{align}
    \STATE Update $\lambda_{t+1}$ as: \label{line:update_dual_p}
    \begin{align}
    \lambda_{t+1} = \max\{0, \lambda_t + \mu (w_t^T\hat{z}_t - \beta - \delta\mu\lambda_t)\}. \nonumber
     \end{align}
 \ENDFOR
\end{algorithmic}
\end{algorithm}

\subsection{Analysis of EXP4.P.R}
\label{sec:analysis_exp4_P_R}
We give detailed regret analysis of EXP4.P.R in this section. Let us define $\hat{x}_t(\lambda)$ as $\hat{x}_t(\lambda)[i] = \hat{y}_t[i] + \lambda\hat{z}_t[i] - \kappa\sum_{k=1}^K \frac{\pi_i(s_t)[k]}{p_t[k]}, \forall i\in [N]$ and $\mathcal{L}_t(w, \lambda) = w^T \hat{x}_t - \lambda\beta - \frac{\delta\mu}{2}\lambda^2$. As we can see that Line~\ref{line:update_weight_p} is essentially running Weighted Majority algorithm on the sequence of functions $\{\mathcal{L}_t(w, \lambda_t)\}_t$ while Line~\ref{line:update_dual_p} is running Online Gradient Ascent on the sequence of functions $\{\mathcal{L}_t(w_t, \lambda)\}_t$. Applying the classic analysis of Weighted Majority and analysis of Online Gradient Descent, we can show that:
\begin{lemma}
The sequences $\{w_t\}_t$ and $\{\lambda_t\}_t$ generated from Lines~\ref{line:update_weight_p} and~\ref{line:update_dual_p} in EXP4.P.R has the following property:
\begin{align}
&\sum_{t=1}^T\mathcal{L}_t(w_t,\lambda) - \sum_{t=1}^T\mathcal{L}_t(w,\lambda_t) \nonumber\\
& \leq \frac{\lambda^2}{\mu} + \frac{\ln(|\Pi|)}{\mu} + \frac{\mu}{2}\sum_{t=1}^T\Big(
\sum_{i=1}^{|\Pi|}w_t[i](\hat{x}_t(\lambda_t)[i])^2 + 2(w_t^T\hat{z}_t - \beta)^2 + 2\delta^2\mu^2\lambda_t^2
\Big).
\label{eq:before_expectation}
\end{align}
\end{lemma}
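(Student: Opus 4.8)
The plan is to prove Inequality~\ref{eq:before_expectation} by the same two-block no-regret argument used for the EXP4.R lemma, only now with the $\kappa$-corrected loss vector $\hat{x}_t(\lambda_t)$. I would read Line~\ref{line:update_weight_p} as Weighted Majority on the simplex $\Delta(\Pi)$ and Line~\ref{line:update_dual_p} as projected Online Gradient Ascent on $[0,\infty)$, bound each iterate sequence with its classical regret guarantee, and then add the two bounds so that the coupling term $\sum_t \mathcal{L}_t(w_t,\lambda_t)$ telescopes away.

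First I would pin down the per-round losses. The exponent driving Line~\ref{line:update_weight_p} is exactly $-\mu\,\hat{x}_t(\lambda_t)[i]$, so $\{w_t\}$ are the Multiplicative Weights iterates for the linear loss $w\mapsto w^{T}\hat{x}_t(\lambda_t)$. The classical Weighted Majority bound \citep{Shwartz2011_FTML} with step $\mu$ and any comparator $w\in\Delta(\Pi)$ gives
\[
\sum_{t=1}^T w_t^{T}\hat{x}_t(\lambda_t) - \sum_{t=1}^T w^{T}\hat{x}_t(\lambda_t) \;\le\; \frac{\ln(|\Pi|)}{\mu} + \frac{\mu}{2}\sum_{t=1}^T\sum_{i=1}^{|\Pi|} w_t[i]\,(\hat{x}_t(\lambda_t)[i])^{2}.
\]
Since $\mathcal{L}_t(\cdot,\lambda_t)$ differs from $w^{T}\hat{x}_t(\lambda_t)$ only by the $w$-independent terms $-\lambda_t\beta-\tfrac{\delta\mu}{2}\lambda_t^{2}$, the same inequality holds for $\sum_t[\mathcal{L}_t(w_t,\lambda_t)-\mathcal{L}_t(w,\lambda_t)]$, which supplies the $\tfrac{\ln(|\Pi|)}{\mu}$ term and the $\tfrac{\mu}{2}\sum_t\sum_i w_t[i](\hat{x}_t(\lambda_t)[i])^{2}$ term of the RHS.

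Next I would compute the dual gradient. Because the $\kappa$-correction in $\hat{x}_t(\lambda)$ does not depend on $\lambda$, we have $\partial_\lambda \mathcal{L}_t(w_t,\lambda)=w_t^{T}\hat{z}_t-\beta-\delta\mu\lambda$, which is precisely the ascent direction used in Line~\ref{line:update_dual_p}; hence $\{\lambda_t\}$ are the Online Gradient Ascent iterates for the concave map $\lambda\mapsto\mathcal{L}_t(w_t,\lambda)$ started at $\lambda_0=0$. Zinkevich's bound \citep{Zinkevich2003_ICML} then yields, for any $\lambda\ge 0$, $\sum_t\mathcal{L}_t(w_t,\lambda)-\sum_t\mathcal{L}_t(w_t,\lambda_t)\le \tfrac{\lambda^{2}}{\mu}+\tfrac{\mu}{2}\sum_t(w_t^{T}\hat{z}_t-\beta-\delta\mu\lambda_t)^{2}$. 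Adding this to the Weighted Majority bound, the shared $\sum_t\mathcal{L}_t(w_t,\lambda_t)$ cancels and the left side collapses to $\sum_t[\mathcal{L}_t(w_t,\lambda)-\mathcal{L}_t(w,\lambda_t)]$, which is the lemma's LHS. Finally the elementary split $(w_t^{T}\hat{z}_t-\beta-\delta\mu\lambda_t)^{2}\le 2(w_t^{T}\hat{z}_t-\beta)^{2}+2\delta^{2}\mu^{2}\lambda_t^{2}$ (from $2ab\le a^{2}+b^{2}$) turns the dual second-order term into the last two summands, and collecting everything under $\tfrac{\mu}{2}$ gives Inequality~\ref{eq:before_expectation}.

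The main obstacle is justifying the local-norm form of the Weighted Majority bound in the first step; the telescoping and the $2ab\le a^2+b^2$ split are pure bookkeeping. The self-bounding term $\tfrac{\mu}{2}\sum_t\langle w_t,\hat{x}_t(\lambda_t)^{2}\rangle$ relies on the quadratic surrogate for the exponential weight ratio, which is valid only while the exponent $\mu\,\hat{x}_t(\lambda_t)[i]$ stays in a controlled range. Here $\hat{x}_t(\lambda_t)[i]$ carries both the importance weights $1/p_t[i]$ and the dual variable $\lambda_t$, and the latter can be as large as $|\beta|/(\delta\mu)=\Theta(\sqrt{T})$ (cf. Lemma~\ref{lemma:lambda_upper_bound}), so this range condition is genuinely at stake rather than automatic. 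This is exactly what the correction $-\kappa\sum_{k}\pi_i(s_t)[k]/p_t[k]$ together with the sign normalization $c_t[i],r_t[i],\beta\in[-1,0]$ are designed to guarantee: they keep the exponent admissible at the step size $\mu$ of Theorem~\ref{them:exp4_P_R}. I would therefore verify this admissibility condition explicitly before invoking \citep{Shwartz2011_FTML}, after which the remaining steps are routine.
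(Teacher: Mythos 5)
Your proposal follows essentially the same route as the paper's own proof: the classical Weighted Majority bound on the linear losses $w\mapsto w^{T}\hat{x}_t(\lambda_t)$, the classical Online Gradient Ascent bound on $\lambda\mapsto\mathcal{L}_t(w_t,\lambda)$, summing the two so that $\sum_t\mathcal{L}_t(w_t,\lambda_t)$ cancels, and splitting the dual gradient square via $(a+b)^2\le 2a^2+2b^2$. Your closing caveat about the admissibility range needed for the second-order Weighted Majority bound is a fair observation --- the paper simply invokes the ``classic analysis'' without verifying that condition either --- but it does not change the fact that the two arguments coincide.
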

\begin{proof}
Using the classic analysis of Weighted Majority algorithm, we can get that for the sequence of loss $\{\mathcal{L}_t(w,\lambda_t)\}_t$:
\begin{align}
\sum_{t=1}^T\mathcal{L}_t(w_t,\lambda_t) - \sum_{t=1}^T\mathcal{L}_t(w,\lambda_t) \leq \frac{\ln(|\Pi|)}{\mu} + \frac{1}{2}\mu\sum_{t=1}^T\sum_{i=1}^{|\Pi|}w_t[i]\big(\hat{x}_t(\lambda_t)[i]\big)^2, \nonumber
\end{align} for any $w\in\mathcal{B}$. On the other hand, we know that we compute $\lambda_t$ by running Online Gradient Descent on the loss functions $\{\mathcal{L}_t(w_t,\lambda)\}_t$. Applying the classic analysis of Online Gradient Descent, we can get:
\begin{align}
\sum_{t=1}^T\mathcal{L}_t(w_t,\lambda) - \sum_{t=1}^T\mathcal{L}_t(w_t,\lambda_t) \leq \frac{1}{\mu}\lambda^2 + \frac{\mu}{2}\sum_{t=1}^T \big(\frac{\partial \mathcal{L}_t(w_t,\lambda_t)}{\partial \lambda_t}\big)^2, \nonumber
\end{align} for any $\lambda \geq 0$.

We know that $\partial \mathcal{L}_t(w_t,\lambda)/\partial \lambda_t = w_t^T\hat{z}_t - \beta - \delta\mu\lambda_t$. Substitute these gradient and derivatives back to the above two inequalities, and then sum the above two inequality together we get:
\begin{align}
&\sum_{t=1}^T\mathcal{L}_t(w_t,\lambda) - \sum_{t=1}^T\mathcal{L}_t(w,\lambda_t) \nonumber\\
&\leq  \frac{\lambda^2}{\mu} + \frac{\ln(|\Pi|)}{\mu} + \frac{\mu}{2}\sum_{t=1}^T\Big(
\sum_{i=1}^{|\Pi|}w_t[i](\hat{x}_t(\lambda_t)[i])^2 + (w_t^T\hat{z}_t - \beta - \delta\mu\lambda_t)^2
\Big) \nonumber\\
& \leq \frac{\lambda^2}{\mu} + \frac{\ln(|\Pi|)}{\mu} + \frac{\mu}{2}\sum_{t=1}^T\Big(
\sum_{i=1}^{|\Pi|}w_t[i](x_t(\lambda_t)[i])^2 + 2(w_t^T\hat{z}_t - \beta)^2 + 2\delta^2\mu^2\lambda_t^2
\Big),\nonumber
\end{align} where in the last ineqaulity we use the fact that $(a+b)^2\leq 2a^2+2b^2$, for any $a,b\in\mathbb{R}$.
\end{proof}

We first show that the Lagrangian dual parameter $\lambda_t$ can be upper bounded:
\begin{lemma}
\label{lemma:lambda_upper_bound}
Assume that $\delta \leq 1/\mu^2$. For any $t\in[T]$, we have $\lambda_t \leq \frac{|\beta|}{\delta\mu}$.
\end{lemma}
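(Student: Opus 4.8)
The plan is to prove the bound by induction on $t$, exploiting the special sign structure of the high-probability setting (where $r_t[i]\in[-1,0]$ and $\beta\in[-1,0]$) together with the assumption $\delta\leq 1/\mu^2$. The base case is immediate: the algorithm initializes $\lambda_0=0$, and since $|\beta|\geq 0$ and $\delta\mu>0$ we trivially have $\lambda_0\leq |\beta|/(\delta\mu)$.

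The key preliminary observation, which I would establish first, is that $w_t^T\hat{z}_t - \beta \leq |\beta|$. Recall from the EXP4.R analysis (Eq.~\ref{eq:fact_1}) that $w_t^T\hat{z}_t = p_t^T\hat{r}_t = r_t[a_t]$. In the high-probability setting we have assumed $r_t[a_t]\leq 0$, so $w_t^T\hat{z}_t - \beta = r_t[a_t] - \beta \leq -\beta = |\beta|$, using $\beta\leq 0$. This single sign bound is what drives the whole argument.

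For the inductive step, assume $\lambda_t \leq |\beta|/(\delta\mu)$ and rewrite the update in Line~\ref{line:update_dual_p} as $\lambda_{t+1} = \max\{0,\, (1-\delta\mu^2)\lambda_t + \mu(w_t^T\hat{z}_t - \beta)\}$. If the bracketed quantity is nonpositive, then $\lambda_{t+1}=0$ and the claim holds. Otherwise, I would bound the positive part using the preliminary observation and then apply the inductive hypothesis to the term $(1-\delta\mu^2)\lambda_t$:
\begin{align}
\lambda_{t+1} &\leq (1-\delta\mu^2)\lambda_t + \mu|\beta| \leq (1-\delta\mu^2)\frac{|\beta|}{\delta\mu} + \mu|\beta| = \frac{|\beta|}{\delta\mu} - \mu|\beta| + \mu|\beta| = \frac{|\beta|}{\delta\mu}. \nonumber
\end{align}
This closes the induction.

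The step that requires genuine care—and where the hypothesis $\delta\leq 1/\mu^2$ is used—is the second inequality above: to replace $\lambda_t$ by its inductive upper bound $|\beta|/(\delta\mu)$, the coefficient $1-\delta\mu^2$ must be nonnegative, which is exactly what $\delta\mu^2\leq 1$ guarantees. Without this assumption the contraction factor could be negative and monotonicity in $\lambda_t$ would fail, breaking the induction. So the only real subtlety is verifying the sign of the contraction coefficient and the sign of $w_t^T\hat{z}_t-\beta$; everything else is routine algebra.
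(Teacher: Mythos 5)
Your proof is correct and follows essentially the same route as the paper's: induction on $t$, using $w_t^T\hat{z}_t = r_t[a_t]\leq 0$ and $\beta\leq 0$ to bound the gradient term by $\mu|\beta|$, and the nonnegativity of the contraction factor $1-\delta\mu^2$ (guaranteed by $\delta\leq 1/\mu^2$) to propagate the inductive bound. Your handling of the $\max$ is in fact slightly more streamlined than the paper's, which separately treats the cases $\lambda_t=0$ and $\lambda_t>0$ before arriving at the same identity $\lambda_{t+1}-|\beta|/(\delta\mu)=(1-\delta\mu^2)(\lambda_t-|\beta|/(\delta\mu))$.
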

\begin{proof}
Remember that the update rule for $\lambda_t$ is defined as:
\begin{align}
\lambda_{t+1} = \max\{0, \lambda_t + \mu (w_t^T\hat{z}_t - \beta - \delta\mu\lambda_t)\}.
\end{align}  We prove the lemma by induction. For $t= 0$, since we set $\lambda_0 = 0$, we have $\lambda_0\leq (|\beta|/(\delta\mu)$. Now let us consider time step $t$  and assume that that $\lambda_t \leq (|\beta|)/(\delta\mu)$ for $\tau \leq t$. Note that $w_t^T \hat{z}_t = r_t[a_t] \leq 0$ and from the update rule of $\lambda$, we have:
\begin{align}
\lambda_{t+1} \leq \max\{0, \lambda_t + \mu(|\beta| - \delta\mu\lambda_t) \}
\end{align}
For the case when $\lambda_t = 0$, we have $\lambda_{t+1} = \mu |\beta|$.  Since we assume that $\delta\leq 1/\mu^2$, we can easily verify that $\lambda_{t+1} \leq \mu|\beta| \leq |\beta|/(\delta\mu)$.

For the case when $\lambda_t \geq 0$, since we see that $\lambda_t + \mu(|\beta| - \delta\mu\lambda_t) \geq 0$ from the induction hypothesis that $\lambda_t \leq |\beta|/(\delta\mu)$, we must have:
\begin{align}
\lambda_{t+1} = \lambda_t + \mu (|\beta| - \delta\mu\lambda_t).
\end{align} Subtract $|\delta|/\mu\beta$ on both sides of the above inequality, we get:
\begin{align}
\lambda_{t+1} - \frac{|\beta|}{\delta\mu} = (1-\delta\mu^2) \big(\lambda_t  - \frac{|\beta|}{\delta\mu}  \big)
\end{align} Since we have $\lambda_t\leq |\beta|/(\delta\mu)$ and $\delta \leq 1/\mu^2$, it is easy to see that we have for $\lambda_{t+1}$:
\begin{align}
\lambda_{t+1} - \frac{|\beta|}{\delta\mu}\leq 0.
\end{align} Hence we prove the lemma.
\end{proof}
For notation simplicity, let us denote $\frac{|\beta|}{\delta\mu}$ as $\lambda_m$. 

We now show how to relate $\sum_t \hat{y}[i] + \lambda_t\hat{z}[i] - \kappa\sum_{j=1}^K\frac{\pi_i(s_t)[j]}{p_t[j]}$ to $\sum_t y_t[i] + \lambda_t {z}[i]$ for any $i\in [|\Pi|]$:
\begin{lemma}
\label{lemma:high_prob}
In EXP4.P.R (Alg.~\ref{alg:exp4p_constraints}),  with probability at least $1-\delta$, for any $w\in\Delta{\Pi}$, we have:
\begin{align}
\sum_{t=1}^T\sum_{i=1}^{|\Pi|} &w[i](\hat{y}_t[i] + \lambda_t\hat{z}_t[i] - \kappa\sum_{j=1}^K\frac{\pi_i(s_t)[j]}{p_tj]})\nonumber \\
& \leq \sum_{t=1}^T\sum_{i=1}^{|\Pi|} (w[i](y_t[i] + \lambda_t z_t[i]) + (1+\lambda_m) \frac{\ln(|\Pi|/\delta)}{\kappa}. \nonumber
\end{align}
\end{lemma}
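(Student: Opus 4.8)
The plan is to prove the bound for each fixed expert $i$ via a self-normalized exponential supermartingale (the EXP3.P/EXP4.P device) and then obtain the statement for all $w\in\Delta(\Pi)$ simultaneously by a union bound over the $|\Pi|$ experts followed by taking convex combinations. First I would fix $i$ and introduce the single combined estimate $W_{t,i} := \hat y_t[i]+\lambda_t\hat z_t[i]$. Since $\hat y_t[i]=\pi_i(s_t)[a_t]c_t[a_t]/p_t[a_t]$ and likewise for $\hat z_t[i]$, one can write $W_{t,i}=\frac{\pi_i(s_t)[a_t]}{p_t[a_t]}\big(c_t[a_t]+\lambda_t r_t[a_t]\big)$, which under the sign convention $c_t,r_t\in[-1,0]$ and $\lambda_t\ge 0$ is nonpositive and bounded below by $-(1+\lambda_t)\pi_i(s_t)[a_t]/p_t[a_t]$. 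Conditioning on the history $\mathcal F_{t-1}$ and the revealed context $s_t$ (so that $w_t,p_t,\lambda_t$ and the population quantities $y_t[i],z_t[i]$ are predictable and the only remaining randomness is $a_t\sim p_t$), a direct computation gives unbiasedness $\mathbb E_t[W_{t,i}]=y_t[i]+\lambda_t z_t[i]=:m_{t,i}$ and the conditional second-moment bound $\mathbb E_t[W_{t,i}^2]\le (1+\lambda_m)^2 V_{t,i}$, where $V_{t,i}:=\sum_{k=1}^K \pi_i(s_t)[k]/p_t[k]$ and $\lambda_m=|\beta|/(\delta\mu)$ is the uniform bound on $\lambda_t$ from Lemma~\ref{lemma:lambda_upper_bound} (here I use $\pi_i[k]^2\le\pi_i[k]$ and $(c_t[k]+\lambda_t r_t[k])^2\le(1+\lambda_m)^2$).

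Next I would run the exponential-moment argument on the normalized deviations $\tilde\Delta_t := (W_{t,i}-m_{t,i})/(1+\lambda_m)$, which satisfy $\tilde\Delta_t\le -m_{t,i}/(1+\lambda_m)\le 1$ and $\mathbb E_t[\tilde\Delta_t^2]\le V_{t,i}$. Using the elementary inequality $e^{u}\le 1+u+u^2$ valid for all $u\le 1$ (applicable since $\kappa\le 1$ for the prescribed $\kappa\to0$, so that $\kappa\tilde\Delta_t\le 1$), we obtain $\mathbb E_t\big[\exp(\kappa\tilde\Delta_t)\big]\le \exp(\kappa^2 V_{t,i})$. Consequently $M_t:=\exp\big(\kappa\sum_{s\le t}\tilde\Delta_s-\kappa^2\sum_{s\le t}V_{s,i}\big)$ is a supermartingale with $\mathbb E[M_T]\le 1$, and Markov's inequality yields, with probability at least $1-\delta/|\Pi|$, that $\kappa\sum_t\tilde\Delta_t\le \kappa^2\sum_t V_{t,i}+\ln(|\Pi|/\delta)$. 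Rearranging and multiplying through by $(1+\lambda_m)$ gives, for that fixed $i$, $\sum_t(W_{t,i}-m_{t,i})\le (1+\lambda_m)\kappa\sum_t V_{t,i}+(1+\lambda_m)\ln(|\Pi|/\delta)/\kappa$.

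Finally, a union bound over the $|\Pi|$ experts makes the per-expert inequality hold for all $i$ simultaneously with probability at least $1-\delta$; since its left-hand side and the population term $\sum_t m_{t,i}$ are linear in the expert index, averaging against any $w\in\Delta(\Pi)$ and using $\sum_i w[i]=1$ on the additive penalty yields a bound of exactly the claimed shape. The crux is the second-moment control in the presence of the dual variable: because $\lambda_t$ can be as large as $\lambda_m=|\beta|/(\delta\mu)$, the conditional variance of $W_{t,i}$ inflates by $(1+\lambda_m)^2$ relative to the pure-cost EXP4.P estimate, and Lemma~\ref{lemma:lambda_upper_bound} is precisely what makes $\lambda_m$ a legitimate deterministic scale for the normalization that keeps the moment-generating argument in the regime $u\le 1$. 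The one constant to track carefully is the coefficient of the subtracted confidence term $\sum_k\pi_i(s_t)[k]/p_t[k]$: the supermartingale argument most directly produces it with coefficient $(1+\lambda_m)\kappa$, so isolating the $\lambda_m$-dependence entirely into the additive $(1+\lambda_m)\ln(|\Pi|/\delta)/\kappa$ penalty (leaving coefficient $\kappa$ on the variation term) requires either a sharper second-moment estimate or a rescaling of $\kappa$ by $(1+\lambda_m)$. A final routine check is that $V_{t,i}$, $m_{t,i}$ and $\lambda_t$ are all predictable with respect to the pre-action filtration, which is what licenses the supermartingale step for an obliviously chosen sequence of costs and risks.
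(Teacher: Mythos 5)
Your proposal follows essentially the same route as the paper's own proof: fix an expert $i$, normalize the combined estimate $\hat y_t[i]+\lambda_t\hat z_t[i]$ by $1+\lambda_m$ using the bound $\lambda_t\le\lambda_m=|\beta|/(\delta\mu)$ from Lemma~\ref{lemma:lambda_upper_bound}, bound the conditional second moment by $V_{t,i}:=\sum_{k=1}^K\pi_i(s_t)[k]/p_t[k]$, apply $e^u\le 1+u+u^2$ for $u\le 1$ to obtain an exponential supermartingale with expectation at most one, invoke Markov's inequality, union-bound over the $|\Pi|$ experts, and pass to convex combinations $w\in\Delta(\Pi)$. Every one of these steps appears in the paper's argument, which itself adapts Lemma 3.1 of Bubeck and Cesa-Bianchi.

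The caveat you flag at the end is not a defect of your write-up but a genuine issue with the lemma as stated, and you are right to insist on it. The cancellation $\bigl(1+\kappa^2\mathbb{E}_t[\tilde\Delta_t^2]\bigr)e^{-\kappa^2 V_{t,i}}\le 1$ requires the subtracted bias term, \emph{after} division by $1+\lambda_m$, to contribute $-\kappa^2 V_{t,i}$ to the exponent; unwinding the normalization, the estimate must therefore carry the correction $-(1+\lambda_m)\kappa\, V_{t,i}$ rather than the $-\kappa\, V_{t,i}$ appearing in the lemma. The paper's proof silently mixes the two normalizations: in the display bounding $\mathbb{E}_t[\exp(\kappa(\hat x_t-x_t))]$ it factors out $e^{-\kappa^2 V_{t,i}}$ as if the correction had not been divided by $1+\lambda_m$, while the final substitution step multiplies through by $1+\lambda_m$ as if it had. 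Under a consistent normalization one either obtains your weaker conclusion (coefficient $(1+\lambda_m)\kappa$ on the variation term) or one needs the sharper second-moment bound $\mathbb{E}_t[\tilde\Delta_t^2]\le V_{t,i}/(1+\lambda_m)$, which fails in the worst case $\lambda_t=\lambda_m$. So your version is the honest output of this argument; recovering the lemma in its printed form requires rescaling $\kappa$ by $1+\lambda_m$ (and propagating that change into the proof of Theorem~\ref{them:exp4_P_R}, where the term $T\kappa K$ would pick up the same factor).
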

We use similar proof strategy as shown in the proof of Lemma 3.1 in \citep{bubeck2012regret} with three additional steps:(1) union bound over all polices in $\Pi$, (2) introduction of a distribution $w\in\Delta(\Pi)$, (3) taking care of $\lambda_t$ by using its upper bound from Lemma~\ref{lemma:lambda_upper_bound}.
\begin{proof}
Let us set $\delta' = \delta / |\Pi|$ and fix $i\in[|\Pi|]$. Define  $\tilde{x}_t(\lambda_t) =  \hat{y}_t + \lambda_t\hat{z}_t$ and we denote $\hat{x}_t(\lambda_t)[i] = \tilde{x}_t(\lambda_t)[i] - \kappa\sum_{j=1}^K (\pi_i(s_t)[j]/p_t[j])$.

For notation simplicity, we are going to use $\tilde{x}_t$ and $\hat{x}_t$ to represent $\tilde{x}_t(\lambda_t)[i]/(1+\lambda_m)$ and $\hat{x}_t(\lambda_t)[i]/(1+\lambda_m)$ respectively in the rest of the proof.

Let us also define $x_t = (y_t[i] + \lambda_t z_t[i])/(1+ \lambda_m)$.  It is also straightforward to check that $\kappa(\hat{x}_t - x_t) \leq 1$ since $\hat{x}_t \leq 0$, $-x_t\leq 1$ and $0<\kappa \leq 1$.
Note that it is straightforward to show that $\mathbb{E}_t(\tilde{x}_t) = x_t$, where we denote $\mathbb{E}_t$ as the expectation conditioned on randomness from $a_1,..., a_{t-1}$.

Following the same strategy in the proof of Lemma 3.1 in \citep{bubeck2012regret}, we can show that:
\begin{align}
&\mathbb{E}_t \big[ \exp(\kappa(\hat{x}_t - x_t) ) \big]  = \mathbb{E}_t \big[ \exp(\kappa(\tilde{x}_t - \kappa\sum_{j=1}^K(\pi_i(s_t)[j]/p_t[j]) - x_t  ) \big] \nonumber \\
&\leq ( 1 + \mathbb{E}_t\kappa(\tilde{x}_t - x_t) +   \kappa^2\mathbb{E}_t (\tilde{x}_t - x_t)^2)\exp(-\kappa^2\sum_{j=1}^K\frac{\pi_i(s_t)[j]}{p_t[j]})\nonumber \\
& \leq (1+ \kappa^2\mathbb{E}_t (\tilde{x}_t^2))\exp(-\kappa^2\sum_{j=1}^K\frac{\pi_i(s_t)[j]}{p_t[j]})
\label{eq:fact30}
\end{align}
We can upper bound $\mathbb{E}_t(\tilde{x}_t^2)$ as follows:
\begin{align}
&\mathbb{E}_t(\tilde{x}_t^2) = \mathbb{E}_t \Big[ \Big(\big(\sum_{j=1}^K\pi_i(s_t)[j]\frac{{c}_t[j]\mathbbm{1}(a_t = j)}{p_t[j]} + \lambda_t\sum_{j=1}^K\pi_i(s_t)[j]\frac{{r}_t[j]\mathbbm{1}(a_t = j)}{p_t[j]}\big) / (1+\lambda_m)\Big)^2  \Big]\nonumber \\
& \leq \mathbb{E}_{t, j\sim \pi_i(s_t)} \Big( \big(\hat{c}[j] /p_t[j] + \lambda_t\hat{r}_t[j]/p_t[j]\big) / (1+\lambda_m)   \Big)^2 \nonumber \\
& = \mathbb{E}_{j\sim \pi_i(s_t)} ((c_t[j] + \lambda_t r_t[j])/(1+\lambda_m))^2 / p_t[j]  \leq \mathbb{E}_{j\sim\pi_t(s_t)}(1/p_t(j)) = \sum_{j=1}^K\frac{\pi_i(s_t)[j]}{p_t[j]}
\end{align} where the first inequality comes from Jensen's inequality and the last inequality comes from the fact that $|c_t[j]|\leq1$ and $|\lambda_t r_t[j]|\leq \lambda_m$. Substitute the above results in Eq.~\ref{eq:fact30}, we get:
\begin{align}
&\mathbb{E}_t \big[ \exp(\kappa(\hat{x}_t - x_t) ) \big] \leq (1 + \kappa^2\sum_{j=1}^K\frac{\pi_i(s_t)[j]}{p_t[j]})\exp(-\kappa^2\sum_{j=1}^K\frac{\pi_i(s_t)[j]}{p_t[j]}) \nonumber \\
&\leq \exp(\kappa^2\sum_{j=1}^K\frac{\pi_i(s_t)[j]}{p_t[j]})\exp(-\kappa^2\sum_{j=1}^K\frac{\pi_i(s_t)[j]}{p_t[j]}) \leq 1.
\end{align}
Hence, we have:
\begin{align}
\mathbb{E}\exp(\kappa\sum_{t=1}(\hat{x}_t - x_t)) \leq 1.
\end{align} Now from Markov inequality we know $P(X \geq \ln(\delta^{-1})) \leq \delta\mathbb{E}(e^X)$. Hence, this gives us that with probability least $1-\delta$:
\begin{align}
\kappa \sum_t (\hat{x}_t - x_t)  \leq \ln(1/\delta).
\end{align} Substitute the representation of $\hat{x}_t, x_t$ in, we get for $i$, with probability $1-\delta'$:
\begin{align}
\sum_{t=1}^T \hat{y}_t[i] + \lambda_t \hat{z}_t[i] - \kappa\sum_{j=1}^K (\pi_i(s_t)[j]/p_t[j])\leq \sum_{t=1}^T y_t[i] + \lambda_t z_t[i] + (1+\lambda_m) \frac{\ln(1/\delta')}{\kappa}. \nonumber
\end{align} Now apply union  bound over all policies in $\Pi$, it is straightforward to show that for any $i\in |\Pi|$, with probability at least $1-\delta$, we have:
\begin{align}
\sum_{t=1}^T \hat{y}_t[i] + \lambda_t \hat{z}_t[i] - \kappa\sum_{j=1}^K (\pi_i(s_t)[j]/p_t[j]) \leq \sum_{t=1}^T y_t[i] + \lambda_t z_t[i] + (1+\lambda_m) \frac{\ln(|\Pi|/\delta)}{\kappa}. \nonumber
\end{align} To prove the lemma, now let us fix any $w\in\Delta(|\Pi|)$, we can simply multiple $w[i]$ on the both sides of the above inequality, and then sum over from $i=1$ to $|\Pi|$.
\end{proof}

Let us define $\hat{w} \in \Delta(\Pi)$ as:
\begin{align}
\hat{w} = \arg\min_{w\in\Delta{(\Pi)}}\sum_{t=1}^T \sum_{i=1}^{|\Pi|}w[i](\hat{y}[i] + \lambda_t\hat{z}[i] - \kappa\sum_{j=1}^K\frac{\pi_i(s_t)[j]}{p_t[j]}),
\end{align}  and $\hat{w}^*\in\Delta(\Pi)$ as:
\begin{align}
\hat{w}^* = \arg\min_{w\in\Delta(\Pi)} \sum_{t=1}^T \sum_{i=1}^{|\Pi|}w[i]({y}[i] + \lambda_t {z}[i])
\end{align}

Now we turn to prove Theorem~\ref{them:exp4_P_R}.

\begin{myproof}[Proof of Theorem~\ref{them:exp4_P_R}]
We prove the asymptotic property of Alg.~\ref{alg:exp4p_constraints} when $T$ approaches to infinity. Since we set $\mu = \sqrt{\frac{\ln(|\Pi|)}{(3K+4)T}}$ and $\delta = T^{-\epsilon+1/2}K$, we can first verify the condition $\delta\leq 1/\mu^2$ in Lemma~\ref{lemma:lambda_upper_bound}. This condition holds since $\delta = O(T^{0.5})$ while $1/\mu^2 = \Theta(T)$.

Let us first compute some facts.  For $w_t^T \hat{x}_t$, we have:
\begin{align}
&w_t^T\hat{x}_t(\lambda_t) = \mathbb{E}_{j\sim w_t} (\hat{y}_t[j] + \lambda_t\hat{z}_t[j] - \kappa\sum_{i=1}^K\frac{\pi_j(s_t)[i]}{p_t[i]})  = \mathbb{E}_{j\sim p_t}\hat{c}_t[j] + \lambda_t\mathbb{E}_{j\sim p_t}\hat{r}_t[j] - \kappa \mathbb{E}_{j\sim p_t}\frac{1}{p_t[j]} \nonumber \\
&= c_t[a_t] + \lambda_t r_t[a_t] - \kappa K.
\end{align}

For $\sum_{i=1}^{|\Pi|}w_t[i](\hat{x}_t(\lambda_t)[i])^2$, we have:
\begin{align}
\label{eq:fact_12}
&\sum_{i=1}^{|\Pi|}w_t[i](\hat{x}_t(\lambda_t)[i])^2  = \mathbb{E}_{i\sim w_t} (\hat{x}_t(\lambda_t)[i])^2 = \mathbb{E}_{i\sim w_t} (\hat{y}_t[i] + \lambda_t\hat{z}_t(i) - k\sum_{j=1}^K\frac{\pi_i(s_t)[j]}{p_t[j]})^2 \nonumber \\
& \leq \mathbb{E}_{i\sim w_t, j\sim \pi_i(s_t)} (\hat{c}_t[j] + \lambda_t\hat{r}_t[j] - \kappa/p_t[j])^2 = \mathbb{E}_{j\sim p_t} \big(\hat{c}_t[j] + \lambda_t\hat{r}_t[j] - \kappa/p_t[j]\big)^2 \nonumber \\
& = \sum_{i=1}^K p_t[i] \frac{(c_t[i]\mathbbm{1}(a_t = i) + \lambda_t r_t[i]\mathbbm{1}(a_t = i) - \kappa)^2}{p_t[i]^2} \nonumber \\
&= \sum_{i=1}^K\frac{(c_t[i]\mathbbm{1}(a_t = i) + \lambda_t r_t[i]\mathbbm{1}(a_t = i) - \kappa)^2}{p_t[i]}\nonumber \\
& \leq \sum_{i=1}^K (-1 - \lambda_t - \kappa)(\hat{c}_t[i] + \lambda_t\hat{r}_t[i] - \kappa/p_t[i]) \nonumber \\
& = K(-1-\lambda_t - \kappa)\sum_{i=1}^K ((1/K)\hat{c}_t[i] + \lambda_t(1/K)\hat{r}_t[i] - \kappa\frac{1/K}{p_t[i]}) \nonumber \\
&\leq K(-1-\lambda_t - \kappa)\big(\sum_{i=1}^{|\Pi|}\hat{w}[i](\hat{y}_t[i] + \lambda_t\hat{z}_t[i] - \kappa\sum_{j=1}^K\frac{\pi_{{i}}(s_t)[j]}{p_t[j]})    \big),
\end{align} where the first inequality comes from Jesen's inequality and the last inequality uses the assumption that the $\Pi$ contains the uniform policy (i.e., the policy that assign probability $1/K$ to each action).
Consider the RHS of Eq.~\ref{eq:before_expectation}, we have:
\begin{align}
& \frac{\lambda^2}{\mu} + \frac{\ln(|\Pi|)}{\mu} + \frac{\mu}{2}\sum_{t=1}^T\sum_{i=1}^{|\Pi|}w_t[i](\hat{x}_t(\lambda_t)[i])^2 + \frac{\mu}{2}\sum_{t=1}^T(w_t^T\hat{z}_t - \beta - \delta\mu\lambda_t)^2 \nonumber \\
&\leq  \frac{\lambda^2}{\mu} + \frac{\ln(|\Pi|)}{\mu}  + \frac{\mu}{2}\sum_{t=1}^T  K(-1-\lambda_t - \kappa) \Big(\sum_{i=1}^{|\Pi|}\hat{w}[i]\big(\hat{y}_t[i] + \lambda_t\hat{z}_t[i] - \kappa\sum_{j=1}^K\frac{\pi_{{i}}(s_t)[j]}{p_t[j]}\big)\Big)  +\mu\sum_{t=1}^T((w_t^T\hat{z}_t - \beta)^2  + \delta^2\mu^2\lambda_t^2) \nonumber \\
& =   \frac{\lambda^2}{\mu} + \frac{\ln(|\Pi|)}{\mu}  + \frac{\mu}{2}\sum_{t=1}^T  K(-1-\lambda_t - \kappa) \Big(\sum_{i=1}^{|\Pi|}\hat{w}[i]\big(\hat{y}_t[i] + \lambda_t\hat{z}_t[i] - \kappa\sum_{j=1}^K\frac{\pi_{{i}}(s_t)[j]}{p_t[j]}\big)\Big)  +\mu\sum_{t=1}^T((r_t[a_t] - \beta)^2  + \delta^2\mu^2\lambda_t^2)
\label{eq:fact20}
\end{align}
Consider the LHS of Eq.~\ref{eq:before_expectation}, set $w = \hat{w}$, we have:
\begin{align}
\label{eq:fact21}
&\sum_{t=1}^T\Big[\mathcal{L}_t(w_t,\lambda) - \mathcal{L}_t(\hat{w},\lambda_t)\Big] \nonumber \\
& = \sum_{t=1}^T \Big[ c_t[a_t] + \lambda r_t[a_t] - \kappa K  - \lambda\beta - \delta\mu\lambda^2/2 - \Big(\sum_{i=1}^{|\Pi|}\hat{w}[i]\big(\hat{y}_t[i] + \lambda_t\hat{z}_t[i] - \kappa\sum_{j=1}^K\frac{\pi_{{i}}(s_t)[j]}{p_t[j]}\big)\Big)   + \lambda_t\beta + \delta\mu\lambda_t^2/2\Big].
\end{align} Chaining Eq.~\ref{eq:fact20} and Eq.~\ref{eq:fact21} together and rearrange terms, we will get:
\begin{align}
\label{eq:fact22}
&\sum_{t=1}^T \Big[c_t[a_t]  + \lambda(r_t[a_t] - \beta)   + \lambda_t\beta + \delta\mu\lambda_t^2/2 \Big] - T\delta\mu\lambda^2/2 \nonumber \\
&\leq  T\kappa K + \frac{\lambda^2+\ln(|\Pi|)}{\mu}  + \sum_{t=1}^T (1 - \frac{\mu K}{2}(1+\lambda_t +\kappa))\Big(\sum_{i=1}^{|\Pi|}\hat{w}[i]\big(\hat{y}_t[i] + \lambda_t\hat{z}_t[i] - \kappa\sum_{j=1}^K\frac{\pi_{{i}}(s_t)[j]}{p_t[j]}\big)\Big)  \nonumber\\
&\;\;\;\;\;\;\; + \mu\sum_{t=1}^T (2 + 2\beta^2 + \delta^2\mu^2\lambda_t^2).
\end{align}
Since we have $\delta \geq \frac{|\beta|}{2/K - \mu - \kappa\mu}$, we can show that $1-\frac{\mu K}{2}(1+\lambda_t+\kappa) \geq 0$.

Now back to Eq.~\ref{eq:fact22}, using Lemma.~\ref{lemma:high_prob}, we have with probability $1-\nu$:
\begin{align}
&\sum_{t=1}^T \Big[c_t[a_t]  + \lambda(r_t[a_t] - \beta)   + \lambda_t\beta + \delta\mu\lambda_t^2/2 \Big] -T\delta\mu\lambda^2/2 \nonumber \\
&\leq  T\kappa K + \frac{\lambda^2+\ln(|\Pi|)}{\mu}  + \sum_{t=1}^T (1 - \frac{\mu K}{2}(1+\lambda_t +\kappa))\Big(\sum_{i=1}^{|\Pi|}\hat{w}^*[i](y_t[i] + \lambda_t z_t[i])\Big) \nonumber \\
&\;\;\;\;\;\;\; + (1+\lambda_m)\frac{\ln(|\Pi|/\nu)}{\kappa}  + (2+2\beta^2)T\mu + \mu^3\delta^2\sum_t \lambda_t^2 \nonumber \\
& \leq  T\kappa K + \frac{\lambda^2+\ln(|\Pi|)}{\mu}  + \sum_{t=1}^T (1 - \frac{\mu K}{2}(1+\lambda_t +\kappa))\Big(\sum_{i=1}^{|\Pi|}{w}^*[i](y_t[i] + \lambda_t z_t[i])\Big) \nonumber \\
&\;\;\;\;\;\;\; + (1+\lambda_m)\frac{\ln(|\Pi|/\nu)}{\kappa}  + (2+2\beta^2)T\mu + \mu^3\delta^2\sum_t \lambda_t^2.
\end{align} where the last inequality follows from the definition of $\hat{w}^*$ and ${w}^*$.  Rearrange terms, we get:
\begin{align}
&\sum_{t=1}^T \Big[ (c_t[a_t]  - {w^*}^Ty_t )+ \lambda(r_t[a_t] - \beta) - \lambda_t({w^*}^Tz_t - \beta)\big] - T\delta\mu\lambda^2/2 + \sum_{t=1}^T \delta\mu\lambda_t^2/2 \nonumber \\
& \leq T\kappa K + \frac{\lambda^2+\ln(|\Pi|)}{\mu} + \sum_{t=1}^T \frac{\mu K}{2}(1+\lambda_t + \kappa)(1+\lambda_t) + (1+\lambda_m)\frac{\ln(|\Pi|/\nu)}{\kappa}  + (2+2\beta^2)T\mu + \mu^3\delta^2\sum_t \lambda_t^2 \nonumber \\
& \leq T\kappa K + \frac{\lambda^2+\ln(|\Pi|)}{\mu} + \sum_{t=1}^T \frac{\mu K}{2} (1+(2+\kappa)\lambda_t + \kappa) + (1+\lambda_m)\frac{\ln(|\Pi|/\nu)}{\kappa} + (2+2\beta^2)T\mu + (\frac{K\mu}{2}+\mu^3\delta^2)\sum_{t}\lambda_t^2 \nonumber\\
& = T\kappa K + \frac{\lambda^2+\ln(|\Pi|)}{\mu} + \sum_{t=1}^T \frac{\mu K}{2} (1+(2+\kappa)\lambda_t + \kappa) + (1+\frac{|\beta|}{\delta\mu})\frac{\ln(|\Pi|/\nu)}{\kappa} + (2+2\beta^2)T\mu + (\frac{K\mu}{2}+\mu^3\delta^2)\sum_{t}\lambda_t^2.
\end{align} Note that under the setting of $\delta$ and $\mu$  we have $\frac{\delta\mu}{2}\geq \frac{K\mu}{2} + \mu^3\delta^2$ (we will verify it at the end of the proof), we can drop the terms that relates to $\lambda_t^2$ in the above inequality.  Note that we have $\delta\mu = T^{-\epsilon}\sqrt{K\ln(|\Pi|)}\geq T^{-\epsilon}$, where $\epsilon \in (0,1/2)$. Substitute $\delta\mu \geq T^{-\epsilon}$ into the above inequality and rearrange terms, we get:
\begin{align}
&\sum_{t=1}^T c_t[a_t]  - {w^*}^Ty_t+ \lambda(r_t[a_t] - \beta) - \lambda_t({w^*}^Tz_t - \beta)  -T\delta\mu\lambda^2/2  \nonumber \\
& =  \frac{\lambda^2+\ln(|\Pi|)}{\mu} + T\kappa K + (K+2+2\beta^2 + 2K|\beta|)T\mu + (1 + |\beta|T^{\epsilon})\frac{\ln(|\Pi|/\nu)}{\kappa}
\end{align} Now let us set $\lambda = 0$ and since we have that $\sum_{t=1}^T \lambda_t({w^*}^Tz_t - \beta) \leq 0$, we get:
\begin{align}
&\sum_{t=1} c_t[a_t] - {w^*}^Ty_t \leq \frac{\ln(|\Pi|)}{\mu} + T\kappa K + (K+2+2\beta^2 + 2K|\beta|)T\mu + (1 + |\beta|T^{\epsilon})\frac{\ln(|\Pi|/\nu)}{\kappa} \nonumber \\
&\leq \frac{\ln(|\Pi|)}{\mu} + T\kappa K + (3K+4)T\mu + (1+T^{\epsilon})\frac{\ln(|\Pi|/\nu)}{\kappa} \nonumber \\
& \leq 2 \sqrt{T(\ln(|\Pi|)(3K+4))} +2 \sqrt{T K(1+T^{\epsilon})\ln(|\Pi|/\nu)} = O(\sqrt{T^{1+\epsilon}K\ln(|\Pi|/\nu)})
\end{align} where we set $\mu$ and $\kappa$ as:
\begin{align}
\mu = \sqrt{\frac{\ln(|\Pi|)}{(3K+4)T}}, \;\;\;\;\; \kappa = \sqrt{\frac{(1+T^{\epsilon})\ln(|\Pi|/\nu)}{TK}}.
\end{align}
Now let us consider $\sum_t (r_t[a_t] - \beta)$. Let us assume $\sum_t (r_t[a_t] - \beta) \geq 0$, otherwise we prove the theorem already.  Note that $\sum_{t=1}^T c_t[a_t] - {w^*}^Ty_t \geq -2T$. Hence we have:
\begin{align}
&\lambda \sum_{t=1}^T (r_t[a_t] - \beta) - \lambda^2(\delta\mu T/2 + 1/\mu) \nonumber \\
&\leq 2T +  2 \sqrt{T(\ln(|\Pi|)(3K+4))} +2 \sqrt{T K(1+T^{\epsilon})\ln(|\Pi|/\nu)}. \nonumber
\end{align} To maximize the LHS of the above inequality, we set $\lambda = \frac{\sum_{t=1}^T (r_t[a_t] - \beta)}{\delta\mu T + 2/\mu}$. Substitute $\lambda$ into the above inequality, we get:
\begin{align}
&\big(\sum_{t=1}^T (r_t[a_t] - \beta)\big)^2 \leq (2\delta\mu T+\frac{4}{\mu})(2T +  2 \sqrt{T(\ln(|\Pi|)(3K+4))} +2 \sqrt{T K(1+T^{\epsilon})\ln(|\Pi|/\nu)}) \nonumber \\
& \leq (2T^{1-\epsilon}\sqrt{\ln(|\Pi|)K}+\frac{4}{\mu})(2T +  2 \sqrt{T(\ln(|\Pi|)(3K+4))} +2 \sqrt{T K(1+T^{\epsilon})\ln(|\Pi|/\nu)}) \nonumber\\
& = 24(T^{2-\epsilon}\sqrt{K\ln(|\Pi|)} + T^{1.5-\epsilon}{K\ln(|\Pi|)} + T^{1.5-0.5\epsilon}{K\ln(|\Pi|)} + T^{1.5}\sqrt{K} + TK + T^{1+\epsilon}K\sqrt{\ln(1/\delta)}  \big)\nonumber\\
& = O(T^{2-\epsilon}K\ln(|\Pi|)).
\end{align} 
Hence we have:
\begin{align}
\sum_{t=1}^T (r_t[a_t] - \beta) = O(T^{1-\epsilon/2}\sqrt{K\ln(|\Pi|)}).
\end{align}

Note that for $\delta$, we have $\delta = KT^{-\epsilon+0.5}$. To verify that $\delta \geq \frac{|\beta|}{2/K - \mu - \kappa\mu}$, we can see that as long as $\epsilon \in (0,1/2)$, we have $\delta = \Theta(T^{0.5-\epsilon}) $ while $|\beta|/(2/K - \mu - \kappa\mu) = O(1)$. Hence when $T$ is big enough, we can see that it always holds that $\delta \geq \frac{|\beta|}{2/K - \mu - \kappa\mu}$. For the second condition that $\delta \geq K + 2\mu^2\delta^2 = K + 2\ln(|\Pi|)KT^{-2\epsilon}$. Note that again as long as $\epsilon\in (0, 1/2)$, we have $\delta = \Theta(T^{0.5-\epsilon})$, and  $K+2\ln(|\Pi|)KT^{-2\epsilon} = O(1)$. Hence we have $\delta \geq K + 2\ln(|\Pi|)KT^{-2\epsilon}$. Hence, we have shown that when $\mu = \sqrt{\frac{\ln(|\Pi|)}{(3K+4)T}}$, $\kappa = \sqrt{\frac{(1+T^{\epsilon})\ln(|\Pi|/\nu)}{TK}}$, and $\delta = T^{-\epsilon+1/2}K$, we have that as $T\to\infty$:
\begin{align}
& \sum_{t=1}^T (c_t[a_t] - {w^*}^Ty_t)  = O(\sqrt{T^{1+\epsilon}\ln(|\Pi|/\nu)}), \nonumber\\
&\sum_{t=1}^T (r_t[a_t] -\beta)\leq   O(T^{1-\epsilon/2}\sqrt{K\ln(|\Pi|)}).
\end{align}
\end{myproof}

\end{document}